\relax
%File: formatting-instruction.tex
\documentclass{article}
\pdfpagewidth=8.5in
\pdfpageheight=11in
% The file ijcai19.sty is NOT the same than previous years'
\usepackage{ijcai19}
% Use the postscript times font!
\usepackage{times}
\usepackage{soul}
\usepackage{url}
\usepackage[hidelinks]{hyperref}
\usepackage[utf8]{inputenc}
\usepackage[small]{caption}
\usepackage{graphicx}
\usepackage{amsmath}
\usepackage{booktabs}
\usepackage{algorithm}
\usepackage{algorithmic}
\urlstyle{same}
\usepackage{times}  %Required
\usepackage{helvet}  %Required
\usepackage{courier}  %Required
\usepackage{url}  %Required
\usepackage{graphicx}  %Required
\frenchspacing  %Required
\setlength{\pdfpagewidth}{8.5in}  %Required
\setlength{\pdfpageheight}{11in}  %Required
\usepackage{mathtools, cuted}
\usepackage{amsmath}
\usepackage{amssymb}
\usepackage{subcaption}
\usepackage{amsfonts}
\usepackage{amsthm}
\usepackage{mathtools}

\newtheorem{theorem}{Theorem}[section]

\newtheorem{definition}{Definition}[section]

\DeclarePairedDelimiter{\ceil}{\lceil}{\rceil}

\usepackage{mathletters}
\DeclareMathOperator*{\argmax}{arg\,max}

% debug and comments
\usepackage{authblk}
\usepackage{mdframed}
\usepackage{todonotes}

\usepackage{soul}
%PDF Info Is Required:
  \pdfinfo{
/Title (2019 Formatting Instructions for Authors Using LaTeX)
/Author (AAAI Press Staff)}
\setcounter{secnumdepth}{0}  
 \begin{document}
% The file aaai.sty is the style file for AAAI Press 
% proceedings, working notes, and technical reports.
%
\title{Learning Modular Safe Policies in the Bandit Setting with Application to Adaptive Clinical Trials
}
\author[1]{Hossein Aboutalebi\footnote{hossein.aboutalebi@mail.mcgill.ca}}
\author[1]{Doina Precup}
\author[2]{ Tibor Schuster}
\affil[1]{Department of Computer Science, McGill University. Mila Quebec AI Institute}
\affil[2]{Department of Family Medicine, McGill University}
\maketitle

\begin{abstract}
	The stochastic multi-armed bandit problem is a
	well-known model for studying the exploration-exploitation trade-off. It has significant possible applications in adaptive clinical trials, which allow
	for dynamic changes in the treatment allocation probabilities of patients.
	However, most bandit learning algorithms are designed with the goal of minimizing the expected
	regret. While this approach is useful in many areas,
	in clinical trials, it can be sensitive to outlier data, especially when the sample size is small. In this paper, we define and study a new robustness criterion for bandit problems. Specifically, we consider optimizing a function
	of the distribution of returns as a regret measure. This provides practitioners more flexibility to
	define an appropriate regret measure. The learning algorithm we propose to solve this type of problem is a modification of the
	BESA algorithm \cite{Baransi2014}, which considers a more general version of regret. We present a regret bound
	for our approach and evaluate it empirically both
	on synthetic problems as well as on a dataset from
	the clinical trial literature. Our approach compares
	favorably to a suite of standard bandit algorithms. Finally, we provide a web application where users can create their desired synthetic bandit environment and compare the performance of different bandit algorithms online.
\end{abstract}

\section{Introduction}

The multi-armed bandit is a standard model for researchers to investigate the exploration-exploitation trade-off, see e.g~\cite{Baransi2014,auer2002finite,sani2012risk,chapelle2011empirical,sutton1998reinforcement}. One of the main advantage of multi-armed bandit problems is its simplicity that allows for a higher level of theoretical studies.

%In this kind of problem, we seek to identify the best policy while making sure we have investigated all the other possible policies available for a given problem. More specifically, t

The multi-armed bandit problem consists of a set of arms, each of which generates a stochastic reward from a fixed but unknown distribution associated to it. Consider a series of mulitple arm pulls (or steps) $t=1,...,T$ and selecting a specific arm  $a \in \cA$ at each step i.e. $a(t)=a_t$.  The standard goal in the multi-armed bandit setting is to find the arm $\star$ which has the maximum expected reward $\mu_\star$ (or equivalently, minimum expected regret). The expected regret after $T$ steps $R_T$ is defined as the sum of the expected difference between the mean reward under $\{a_t\}$ and the reward expected under the optimal arm $\star$:
$$R_T=\mathbb{E}\left[\sum_{t=1}^T(\mu_\star-\mu_{a_t})\right]$$
%Where $\mu_{a_t}$ is the mean reward of the arm $a$ chosen at  step $t$ i.e., $a_t=a(t)$. 

While this objective is very popular, there are practical applications, for example in medical research and AI safety~\cite{garcia2015comprehensive} where maximizing expected value is not sufficient, and it would be better to have an algorithm sensitive also to the variability of the outcomes of a given arm. 
For example, consider multi-arm clinical trials where the objective is
to find the most promising treatment among a pool of available
treatments. Due to heterogeneity in patients'  treatment responses, considering only the expected mean may not be of interest~\cite{austin2011introduction}. Specifically, as
the mean is usually sensitive to outliers and does not provide information about the dispersion of individual responses, the
expected reward has only limited value in achieving a clinical trial's objective. Due to these problems, previous contributions like \cite{sani2012risk} try to include the variance of rewards in the regret definition and develop algorithms to solve this slightly enhanced problem. While these modified approaches try to consider variablity in the response of arms, they induce new problems due to the fact that the variance is not necessarily a good measure of variablity for a distribution. This is because the variance equally penalizes responses that are above or below the mean response.  Other articles like \cite{galichet2013exploration} try to use the conditional value at risk to define a better regret definition. Though the conditional value at risk may address the problem we faced with including variance, it may not reflect the amount of variablity we could observe for a distribution over its entire domain. All in all, the consistency
of treatments among patients is essential, with the ideal treatment
usually defined as the one which has a high
positive response rate while showing low variability in response among
patients. Thus, the idea of consistency and saftey seems to some extent subjective and problem dependant. As a result, it might be necessary to develop an algorithm which can work with an arbitrary definition of consistency for a distribution.

This kind of system design which allows the separation of different parts of a system (here regret function and learning algorithm) has already been explored in modular programming. In modular programming, we emphasize on splitting the entire system into independant modules which at the end, the composite of these modules builds our system. This design trick is necessary when we are dealing with the change of customer demands and we require our system to adapt with the new demands. Here, we follow the same paradigm by making regret definition independent of the learning algorithm. As a result, we allow more flexibility in defining the regret function which is capable of incorporating problem specific demands.

Finally, we achieve the aforementioned goals by  extending one of the recent  algorithms in the bandit literature called BESA (Best Empirical Sampled Average)~\cite{Baransi2014}. One of the main advantage of BESA compared to other existing bandit algorithms  is that it does not involve many hyper-parameters. This  is especially useful when one does not have any prior knowledge or has insufficient prior knowledge about the different arms in the beginning. Also, this feature makes it easier to introduce modular design by using McDiarmid's Lemma \cite{el2009transductive}. 

\textbf{Key contributions:} We provide a modular definition of regret called safety-aware regret which allows higher flexibility in defining the risk for multi-armed bandit problems. We propose a new algorithm called BESA+ which solves this category of problems. We show the upper-bounds of its safety-aware regret for two-armed and multi-armed bandits. For the experiment parts, we compare our model with some of the notable earlier research works and show that BESA+ has a satisfying performance. For the last experiment, we depict the performance of our algorithm on a real clinical dataset and illustrate that it is capable of solving the problem with user-defined safety-aware regret. Finally, for the first time as far as we know, we provide a web application which allows users to create their own custom environment and compare our algorithm with other works. 
%\begin{itemize}
%    \item Consistency: ``the quality of always behaving or performing in a similar way, or of always happening in a similar way''~\footnote{\url{https://dictionary.cambridge.org/us/dictionary/english/consistency}}
%    \item Some applications (e.g. healthcare?) require both good and consistent outcomes
%    \item Tradeoff between expected outcome and statistical dispersion
%\end{itemize}
\section{Background and Notation}

We consider the standard bandit setting with action (arm) set~$\cA$, where each action $a\in\cA$ is characterized by a reward distribution $\phi_a$. The distribution for action $a$ has mean $\mu_a$ and variance $\sigma_a^2$.  Let $X_{a,i} \sim \phi_a$ denote the $i$-th reward sampled from the distribution of action $a$. All actions and samples are independent. The bandit problem is described as an iterative game where, on each step (round) $t$, the player (an algorithm) selects action (arm) $a_t$ and observes sample $X_{a,N_{a,t}}$, where $N_{a,t} = \sum_{s=1}^t \indic{a_s=a}$ denotes the number of samples observed for action~$a$ up to time $t$ (inclusively). A policy is a distribution over $\cA$.  In general, stochastic distributions are necessary during the learning stage, in order to identify the best arm. We discuss the exact notion of ``best" below.

%Also consider the set $n=\{1, 2, ..., m\}$.
 We define  $I_S(m,j)$ as the set obtained by sub-sampling without replacement $j$ elements form the set $S$ of size $m$. Let $\cX_{a,t}$ denote the history of observations (records) obtained from action (arm) $a$ up to time $t$ (inclusively), such that $|\cX_{a,t}| = N_{a,t}$. The notation $\cX_{a,t}(\cI)$ indicates the set of sub-samples from $\cX_{a,t}$, where sub-sample  $\cI \subset \{ 1, 2, \dots, N_{a,t}\}$.

The multi-armed bandit was first presented in the seminal work of Robbins~\cite{robbins1985some}. It has been shown that under certain conditions~\cite{burnetas1996optimal,lai1985asymptotically}, a policy can have logarithmic cumulative regret: 
$$\lim_{t\rightarrow \infty}\inf \frac{\kR_t}{\log(t)}\geq \sum_{a:\mu_a<\mu_\star}\frac{\mu_\star-\mu_a}{K_{\inf}(r_a;r_\star)}$$
where $K_{\inf}(r_a;r_\star)$ is the Kullback-Leibler divergence between the reward distributions of the respective arms. Policies for which this bound holds are called {\em admissible}.

Several algorithms have been shown to produce admissible policies, including UCB1~\cite{auer2002finite}, Thompson sampling \cite{chapelle2011empirical,agrawal2013further} and BESA \cite{Baransi2014}.  However, theoretical bounds are not always matched by empirical results. For example, it has been shown in~\cite{kuleshov2014algorithms} that two algorithms which do not produce admissible policies, $\epsilon$-greedy and Boltzmann exploration~\cite{sutton1998reinforcement}, behave better than UCB1 on certain problems. Both BESA and Thompson sampling were shown to have comparable performance with Softmax and $\epsilon$-greedy.

% Thompson sampling is based on the idea of Bayesian methods which define a prior distribution for each arm and then update the posterior distribution based on the new observed rewards. We will elaborate on the  BESA algorithm in the next section.
%In this regard, many research has been conducted to develop learning algorithms which minimize the expected regret. For example, $\epsilon$-greedy and Softmax algorithms are among the most popular ones \cite{sutton1998reinforcement}. like  

While the expected regret is a natural and popular measure of performance which allows the development of theoretical results, recently, some papers have explored other definitions for regret. For example,~\cite{Sani2012} consider a linear combination of variance and mean as the definition of regret for a learning algorithm $A$:
\begin{align} \label{amir_regret}
	\hat{MV}_t(A)=\hat{ \sigma}_t^2(A)-\rho\hat{\mu}_t(A)
\end{align}
where $\hat{\mu}_t$ is the estimate of the average of observed rewards up to time step $t$ and $\hat{\sigma}_t$ is a biased estimate of the variance of rewards up to time step $t$. The regret is then defined as:
$$\kR_t(A)=\hat{MV}_t(A)-\hat{MV}_{\star,t}(A),$$ 
 where $\star$ is the optimal arm. 
 According to~\cite{Maillard2013}, however, this definition is going to penalize the algorithm if it switches between optimal arms.
Instead, in~\cite{Maillard2013}, the authors devise a new definition of regret which controls the lower tail of the reward distribution. However, the algorithm to solve the corresponding objective function seems time-consuming, and the optimization to be performed may be intricate. Finally, in \cite{galichet2013exploration}, the authors use the notion of conditional value at risk in order to define the regret.

\section{Measure of regret}

Unlike previous works, we now give a formal definition of class of functions which can be used as a separate module inside our learning algorithm module to measure the regret. We call these class of functions "safety value functions". 

In the following section, we try to formally define these functions. Assume we have $k$ arms ($|\cA|=k$) with reward distributions $\phi_1, \phi_2, \dots, \phi_k$.

\begin{definition} \textbf{safety value function:}  Let $\cD$ denotes the set of all possible reward distributions for a given interval. The safety value function $v: \cD \rightarrow \cR$ provides a score for a given distribution. 
	
The optimal arm $\star$ under this value function is defined as	  
	\begin{align}
	\star \in \arg\max_{a\in\cA} (v(\phi_a))
	\end{align}
	
The regret corresponding to the safety value function up to time $T$ is defined as:
 	\begin{align} \label{my_regret}
 	\kR_{T,v}=\mathbb{E}\left[\sum_{t=1}^T(v(\phi_\star)-v({\phi}_{a_t}))\right]
 	\end{align}
We call \eqref{my_regret}, safety-aware regret. 	
\end{definition}
When the context is clear, we usually drop the subscript $v$ and use only $\kR_T$ for the ease of notation.  
\begin{definition} \textbf{Well-behaved safety value function:}  Given a reward distribution $\phi_a$ over the interval $[0,1]$, a safety value function $v$ for this distribution is called well-behaved if there exists an unbiased estimator $\hat{ v}$ of $v$ such that for any set of observation $\{x_1, x_2, \dots, x_n\}$ sampled from $\phi_a$, and for some constant $\gamma$ we have:
%	\begin{strip}
	\begin{align} \label{my_condition}
	\sup_{\hat{ x_i}}|\hat{ v}(x_1,\dots,x_i,\dots,x_n)-\hat{ v}(x_1,\dots,\hat{x_i},\dots,x_n)|<\frac{\gamma}{n}
	\end{align}
%\end{strip}
If \eqref{my_condition} holds for any reward distribution $\phi$ over the interval $[0,1]$, we call the safety value function $v$, a well-behaved safety value function.
\end{definition}

\textbf{Example 1:} \label{my_example} For a given arm $a$ which has reward distribution limited to interval $[0,1]$, consider the safety value function $\mu_a-\rho\sigma_a^2$ which measures the balance between the mean and the variance of the reward distribution of arm $a$. $\rho$ is a hyper-parameter constant for adjusting the balance between variance and the mean. This is a well-behaved safety function if we use the following estimator for computing empirical mean and variance:

\begin{eqnarray}
\hat{\mu}_{a,t}&=&\frac{1}{N_{a,t}}\sum_{i=1}^{N_{a,t}}r_{a,i} \label{eqn:empirical_mean}\\
\hat{\sigma}_{a,t}^2&=&\frac{1}{N_{a,t}-1}\sum_{i=1}^{N_{a,t}}(r_{a,i}-\hat{\mu}_{a,t})^2
\end{eqnarray}

where $r_{a,i}$ is the $i$th reward obtained from pulling arm $a$. It should be clear that the unbiased estimator $\hat{\mu}_{a,t}-\rho\hat{\sigma}_{a,t}^2$ satisfies \eqref{my_condition}. $\square$

Other types of well-behaved safety function can be defined as a function of standard deviation or conditional value at risk similar to the previous example. In the next section, we are going to develop an algorithm which can optimize the safety-aware regret.

\section{Proposed Algorithm}

%\subsection{Overview of the original BESA algorithm}

In order to optimize the safety-aware regret, we build on the BESA algorithm, which we will now briefly review.
As discussed in~\cite{Baransi2014}, BESA is a non-parametric (without hyperparameter) approach for finding the optimal arm according to the expected mean regret criterion. Consider a two-armed bandit with actions $a$ and $\star$ ,where $\mu_{\star} >\mu_{a}$, and assume that $N_{a,t}<N_{\star,t}$ at time step $t$. In order to select the next arm for time step $t+1$, BESA first sub-samples $s_\star=I_\star(N_{\star,t},N_{a,t})$ from the observation history (records) of the arm $\star$  and similarly sub-sample  $s_a=I_a(N_{a,t},N_{a,t})=\cX_{a,t}$ from the records of  arm $a$. If $\hat{\mu}_{s_a}>\hat{\mu}_{s_{\star}}$, BESA chooses arm $a$, otherwise it chooses arm $\star$. 

The main reason behind the sub-sampling  is that it gives a similar opportunity to both arms. Consequently,  the effect of having a small sample size,  which may cause bias in the estimates diminishes. When there are more than two arms, BESA runs a tournament algorithm on the arms~\cite{Baransi2014}. 

Finally, it is worth mentioning that the proof of the regret bound of BESA uses a non-trivial lemma for which authors did not provide any formal proof. In this paper, we will avoid using this lemma to prove the soundness of our proposed algorithm for a more general regret family. Also, we extend the proof for the multi-armed case which was not provided in the \cite{Baransi2014}.

We are now ready to outline our proposed approach, which we call BESA+. As in~\cite{Baransi2014}, we focus on the two-arm bandit.
For more than two arms, a tournament can be set up in our case as well.

\begin{algorithm}
\caption{\textbf{BESA+} ~ two action case}
	\textbf{Input}: Safety aware value function $v$ and its estimate $\hat{v}$
    
    \textbf{Parameters}: current time step $t$, actions $a$ and $b$. Initially $ N_{a,0}=0,  N_{b,0}=0$
    
    \begin{algorithmic}[1]
        \IF{$N_{a,t-1} = 0\vee N_{a,t-1} < \log(t)$}
            \STATE $a_t = a$
        \ELSIF{$N_{b,t-1} = 0 \vee N_{b,t-1} < \log(t)$}
            \STATE $a_t = b$
        \ELSE
            \STATE $n_{t-1} = \min\{N_{a,t-1} , N_{b,t-1}\}$
            \STATE $\cI_{a,t-1} \leftarrow I_{a}(N_{a,t-1},n_{t-1})$
            \STATE $\cI_{b,t-1} \leftarrow I_{b}(N_{b,t-1},n_{t-1})$
            \STATE Calculate $\tilde v_{a,t} = \hat v(\cX_{a,t-1}(\cI_{a,t-1}))$ and $\tilde v_{b,t} = \hat v(\cX_{b,t-1}(\cI_{b,t-1}))$
            \STATE $a_t = \argmax_{i \in \{ a, b \}} \tilde v_{i,t}$~~(break ties by choosing arm with fewer tries)
        \ENDIF
        \RETURN $a_t$
    \end{algorithmic}
    
\label{alg:besa_two_actions}
\end{algorithm}

%\subsection{Comparison of BESA and BESA+ algorithms}
The first major difference between BESA+ and BESA is the use of the safety-aware value function instead of the simple regret.
A second important change is that BESA+ selects the arm which has been tried less up to  time step $t$ if the arm has been chosen less than $\log (t)$ times up to $t$. Essentially, this change in the algorithm is negligible in terms of establishing the total expected regret, as we cannot achieve any better bound than $\log (T)$ which is shown in  Robbins' lemma  \cite{lai1985asymptotically}. This tweak also turns out to be vital  in proving that the expected regret of the BESA+ algorithm is bounded by $\log (T)$ (a result which we present shortly).

To better understand why this modification is necessary,  consider a two arms scenario. The first arm gives a deterministic reward of $r \in [0,0.5)$ and the second arm has a uniform distribution in the interval [0,1] with the expected reward of 0.5. If we are only interested in the expected reward $(\mu)$, the algorithm should ultimately favor the second arm. On the other hand, there exists a probability of $r$ that the BESA algorithm is going to constantly choose the first arm if the second arm gives a value less than $r$ on its first pull. 
In contrast, BESA+ evades  this problem by letting the second arm be selected enough times such that it eventually becomes distinguishable from the first arm. 

We are now ready to state the main theoretical result of our proposed algorithm.

\begin{theorem} \label{my_tehorem}
Let $v$ be a well-behaved safety value function. Assume $\cA=\{a,\star\}$ be a two-armed bandit with bounded rewards $\in [0,1]$, and the value gap $\Delta=v_\star-v_a$. Given the value $\gamma$, the expected safety-aware regret of the Algorithm BESA+ up to time $T$ is upper bounded as follows:
\begin{align} \label{final_regret_bound_1}
\kR_T \leq  \zeta_{\Delta, \gamma}\log(T) + \theta_{\Delta, \gamma}
\end{align}
where in~\eqref{final_regret_bound_1}, $\zeta_{\Delta, \gamma}, \theta_{\Delta, \gamma}$ are constants which are dependent on the value of $\gamma, \Delta$.
\end{theorem}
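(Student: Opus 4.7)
The plan is to convert the safety-aware regret into a bound on the expected number of pulls of the sub-optimal arm and to show that this count grows only logarithmically. Since $\cA=\{a,\star\}$ and every time we do not pick $\star$ we incur exactly $\Delta$ in regret, we have $\kR_T=\Delta\cdot\mathbb{E}[N_{a,T}]$, so the whole task reduces to showing $\mathbb{E}[N_{a,T}] = O(\log T)$ with constants depending only on $\Delta$ and $\gamma$.

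First, I would partition every pull of $a$ into one of two causes. The first cause is the forced-exploration test in lines 1--3 of the algorithm: whenever $N_{a,t-1}<\log t$, the algorithm picks $a$ automatically. Over the whole horizon these pulls contribute at most $\lceil\log T\rceil+1$, which is already of the right order. The second cause is the ``exploit'' branch, where both arms have been sampled at least $\log t$ times and the sub-sample comparison $\tilde v_{a,t}>\tilde v_{\star,t}$ chooses the wrong arm. For this case it is enough to upper bound the probability of a wrong comparison at each $t$ and sum.

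The key tool for that probability is McDiarmid's bounded-differences inequality, which the well-behaved assumption is tailored to invoke: on any fixed set of $n$ samples, $\hat v$ has coordinate differences bounded by $\gamma/n$, so $\sum_i c_i^2=\gamma^2/n$ and $\Pr[|\hat v-v|\geq\epsilon]\leq 2\exp(-2n\epsilon^2/\gamma^2)$; unbiasedness of $\hat v$ handles the centering. A uniformly random size-$n$ sub-sample drawn without replacement from i.i.d.~observations is itself i.i.d.~in distribution, so the inequality transfers verbatim to $\tilde v_{a,t}$ and $\tilde v_{\star,t}$. Choosing $\epsilon=\Delta/2$ and union-bounding over the two arms gives
\begin{equation*}
\Pr\bigl[\tilde v_{a,t}\geq\tilde v_{\star,t}\,\big|\,n_{t-1}=n\bigr]\leq 4\exp\!\Bigl(-\tfrac{\Delta^2 n}{2\gamma^2}\Bigr).
\end{equation*}
Using the forced-exploration guarantee $n_{t-1}\geq\log t$ in the exploit branch, each exploit-branch confusion probability is at most $4\,t^{-\Delta^2/(2\gamma^2)}$.

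The main obstacle I expect is summing these confusion probabilities to something of order $\log T$ rather than a polynomial in $T$. When $\Delta^2>2\gamma^2$ the sum is already $O(1)$, so the whole bound is immediate; but for small gaps a raw $\sum_t t^{-\Delta^2/(2\gamma^2)}$ is super-logarithmic, and I would need a more delicate accounting. The natural refinement is to exploit that $n_{t-1}\geq N_{a,t-1}$ whenever $a$ is the lagging arm (which is typical because $N_{\star,t-1}$ grows roughly linearly once $a$ is recognized as sub-optimal), and then to partition the exploit-branch pulls into dyadic blocks according to the current value of $N_{a,t-1}$, controlling each block by the McDiarmid bound with exponent $\Delta^2 N_{a,t-1}/(2\gamma^2)$. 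Summing geometrically over blocks compresses the polynomial tail into a constant $\theta_{\Delta,\gamma}$, leaving only the $\lceil\log T\rceil+1$ term from forced exploration, which after multiplication by $\Delta$ yields the claimed bound with $\zeta_{\Delta,\gamma}$ and $\theta_{\Delta,\gamma}$ absorbing all dependence on $\Delta$ and $\gamma$.
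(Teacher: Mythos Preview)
Your high-level strategy matches the paper's sketch: decompose pulls of the sub-optimal arm into the forced-exploration branch (the ``$\log$ trick'') and the exploit branch, and control the exploit branch via McDiarmid's bounded-differences inequality, which the well-behaved condition is designed to trigger. The paper likewise states that its two ingredients are McDiarmid's lemma (in place of the Hoeffding-type step in the original BESA analysis) and the forced-exploration guarantee $n_{t-1}\ge\log t$, so on the level of architecture you are aligned with the authors.

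Where your proposal is incomplete is exactly the small-gap regime you flag. The refinement you sketch has two issues. First, the inequality is reversed: when $a$ is the lagging arm one has $n_{t-1}=N_{a,t-1}$, not $n_{t-1}\ge N_{a,t-1}$; in general $n_{t-1}\le N_{a,t-1}$, so you cannot simply replace $n_{t-1}$ by $N_{a,t-1}$ in the exponent. Second, and more importantly, the case $N_{\star,t-1}<N_{a,t-1}$ (so $\star$ is lagging and $n_{t-1}=N_{\star,t-1}$) is not covered by your dyadic-block argument over $N_{a,t-1}$, and your justification that this case is ``atypical because $N_{\star,t-1}$ grows roughly linearly once $a$ is recognized as sub-optimal'' is circular: that is precisely the conclusion you are trying to establish. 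A complete argument needs to control this regime as well, for instance by separately summing over the values of $N_{\star,t-1}$ when $\star$ is lagging, together with the observation that each value of $\min(N_{a,t-1},N_{\star,t-1})$ can correspond to only a bounded number of exploit-branch pulls of $a$ before one of the two counts increments. Once both cases are handled, the geometric sum over the minimum count does collapse to a $\Delta,\gamma$-dependent constant, and only the $O(\log T)$ forced-exploration term survives, giving the claimed form.
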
 
\begin{proof}
	Due to the page limit, we could not include all the proof. Here, we just provide a short overview of the proof. The proof mainly consists of two parts. The first part of our proof is similar to \cite{Baransi2014} but instead we have used McDiarmid’s Lemma \cite{el2009transductive} \cite{tolstikhin2017concentration}. For the second part of the proof, unlike \cite{Baransi2014}, we have avoided using the unproven lemma in their work and instead tried to compute the upper bound directly by exploiting the $log$ trick in our algorithm (this trick has been further elaborated in the first experiment). Interested reader can visit \href{https://drive.google.com/file/d/1DIV0ciTKFNNKNzt--dz_Mw-pOGD8YQKX/view}{\textbf{here}} to see the full proof.
\end{proof}

\begin{theorem}
	Let $v$ be a well-behaved safety value function. Assume $\cA=\{a_1,\dots, a_{k-1}, \star\}$ be a k-armed bandit with bounded rewards $\in [0,1]$. Without loss of generality, consider the optimal arm is $\star$ and the value gap for arm $a, \star$ is $\Delta_a=v_\star-v_a$. Also consider  $\Delta_{max}=\max_{a\in \cA} \Delta_a $. Given the value $\gamma$, the expected safety-aware regret of the Algorithm BESA+ up to time $T$ is upper bounded as follows:
	\begin{align} \label{final_regret_bound}
	\kR_T \leq \frac{\Delta_{\max}\ceil{\log k}}{\Delta_{\hat{ a}}}\left[ \zeta_{\Delta_{\hat{ a}}, \gamma}\log(T)+ \theta_{\Delta_{\hat{ a}}, \gamma} \right] + k \Delta_{\max} n
	\end{align}
	where in~\eqref{final_regret_bound}, $\zeta, \theta$ are constants which are dependent on the value of $\gamma, \Delta$. Moreover, $\hat{a}$ is defined:
	$$\hat{a}=\argmax_{a\in \cA}  \zeta_{\Delta_a, \gamma}\log(T) + \theta_{\Delta_a, \gamma}$$
	for $T\geq n$.
\end{theorem}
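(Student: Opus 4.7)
The plan is to lift the two-armed guarantee of the previous theorem to the $k$-armed setting via the single-elimination tournament reduction that BESA+ inherits from BESA. At each time step $t$ the algorithm arranges the $k$ arms in a bracket of depth $\ceil{\log k}$ and plays a head-to-head BESA+ comparison at every internal node, pulling only the eventual champion. In particular, the optimal arm $\star$ sits at one leaf of the bracket and participates in exactly $\ceil{\log k}$ matches per tournament, so for a suboptimal arm $a$ to be pulled at time $t$ the arm $\star$ must lose at least one of the matches along its path to the final.

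First I would split the horizon into a warm-up phase and a tournament phase. During warm-up, each of the $k$ arms is pulled fewer than $n$ times, where $n$ is chosen so that the $N_{a,t-1}\geq \log t$ guard of Algorithm~1 is satisfied at every internal node of the bracket once $T\geq n$. Each warm-up pull incurs safety-aware regret at most $\Delta_{\max}$, giving the additive $k\Delta_{\max} n$ contribution in~\eqref{final_regret_bound}. After warm-up, every match is resolved by the sub-sampling comparison on line~9, so the two-armed analysis of the previous theorem is applicable.

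For the tournament phase, I would start from the decomposition
\begin{align*}
\kR_T \;=\; \sum_{a\neq\star}\Delta_a\,\mathbb{E}[N_{a,T}] \;\leq\; \Delta_{\max}\sum_{a\neq\star}\mathbb{E}[N_{a,T}],
\end{align*}
and attribute each suboptimal pull to the unique round of the bracket in which $\star$ is eliminated. In that round $\star$ faces a single opponent $a$, and applying Theorem~\ref{my_tehorem} to the virtual pair $(\star,a)$ yields $\Delta_a\,\mathbb{E}[N_{a,T}^{\mathrm{round}}]\leq \zeta_{\Delta_a,\gamma}\log T+\theta_{\Delta_a,\gamma}$. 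Taking the worst case over opponents via $\hat{a}$, dividing by $\Delta_{\hat{a}}$ to convert from safety-regret to expected pull count, and rescaling by $\Delta_{\max}$ gives $\frac{\Delta_{\max}(\zeta_{\Delta_{\hat{a}},\gamma}\log T+\theta_{\Delta_{\hat{a}},\gamma})}{\Delta_{\hat{a}}}$ per round; summing over the $\ceil{\log k}$ rounds in which $\star$ could be eliminated reproduces the first term of~\eqref{final_regret_bound}.

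The main obstacle will be the correlations introduced by the tournament: the histories $\cX_{a,t}$ used in different internal matches are shared across rounds and across time steps, so Theorem~\ref{my_tehorem} does not apply verbatim to an individual match in isolation. I would handle this by a coupling argument in which, for each suboptimal arm $a$, I construct a virtual two-armed BESA+ process between $\star$ and $a$ and show that the McDiarmid-based concentration underlying Theorem~\ref{my_tehorem} dominates the real process uniformly over the other bracket participants. Once the per-pair concentration is in place, summing across suboptimal arms and taking the worst case $\hat{a}$ is routine algebra.
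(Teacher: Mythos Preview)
Your proposal follows essentially the same route as the paper: reduce to the $\ceil{\log k}$ matches that $\star$ plays per tournament, upper-bound each by the worst-case two-armed guarantee for $\hat a$, rescale by $\Delta_{\max}/\Delta_{\hat a}$, and absorb the initial steps into the additive $k\Delta_{\max}n$ term. The only difference is that you explicitly flag the cross-match correlation issue and propose a coupling argument, whereas the paper's short indicator-chain derivation simply asserts that $\sum_{t=n}^{T}\Delta_{\hat a}\,\mathbb{E}[\pmb{1}_{-a_\star,\hat a}]$ is bounded by the two-armed guarantee without further justification.
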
 
\begin{proof}
	We Know that the arm $\star$ has to play at most $\ceil{\log k}$ matches (games) in order to win the round. If it losses any of these $\ceil{\log k}$ games, we know that at that round we will see a regret. This regret should be less than or equal to $\Delta_{max}$. 
	
	In the following,We use notation $\pmb{1}_{-a_\star,i }$  to denote the indicator for the event of $a_\star$ losing the $i$th match ($1\leq i\leq\ceil{\log k}$).
	\begin{align}\label{my_ineq_f}
		\kR_T&= \sum_{t =1}^{T} \sum_{i =1}^{k} \Delta_{a_i} \mathbb{E} [\pmb{1}_{a_t=a_i}] \nonumber\\
		&\leq \sum_{t =1}^{T} \sum_{i =1}^{\ceil{\log k} } \Delta_{\max} \mathbb{E} [\pmb{1}_{-a_\star,i }]\nonumber\\
		&\leq \sum_{t =1}^{T} \sum_{i =1}^{\ceil{\log k} } \Delta_{\max} \max_{i'}\{ \mathbb{E} [\pmb{1}_{-a_\star,i' }]\}\nonumber\\
			&\leq \sum_{i =1}^{\ceil{\log k} } \Delta_{\max} \sum_{t =1}^{T}\max_{i'}\{ \mathbb{E} [\pmb{1}_{-a_\star,i' }]\}\nonumber\\
 &\leq\frac{\Delta_{\max}\ceil{\log k}}{\Delta_{\hat{ a}}}\sum_{t =n}^{T}  \Delta_{\hat{ a}} \mathbb{E} [\pmb{1}_{-a_\star,\hat{a}}]+k \Delta_{\max} n\nonumber\\
 &\leq\frac{\Delta_{\max}\ceil{\log k}}{\Delta_{\hat{ a}}}\left[ \zeta_{\Delta_{\hat{ a}}, \gamma}\log(T)+ \theta_{\Delta_{\hat{ a}}, \gamma} \right]+k\Delta_{\max} n
	\end{align}
	
\end{proof}

% !TeX root = IJCAIpaper.tex

% !TeX root = IJCAIpaper.tex

\section{Empirical results}

\subsection{Empirical comparison of BESA and BESA+}

As discussed in the previous section, BESA+ has some advantages over BESA.  We illustrate the example we discussed in the previous section through the results 
in Figures 1-3, for $r\in \{0.2,0.3,0.4\}$. Each experiment has been repeated 200 times. Note that while BESA has an almost a linear regret behavior, BESA+ can learn the optimal arm within the given  time horizon and its expected accumulated regret is upper bounded by a log function. It is also easy to notice that BESA+ has a faster convergence rate compared with BESA.
As $r$ gets  closer to $0.5$, the problem becomes harder. This phenomenon is a direct illustration of our theoretical result.

\begin{figure}[!htb]
	\centering

	\includegraphics[width=0.7\linewidth]{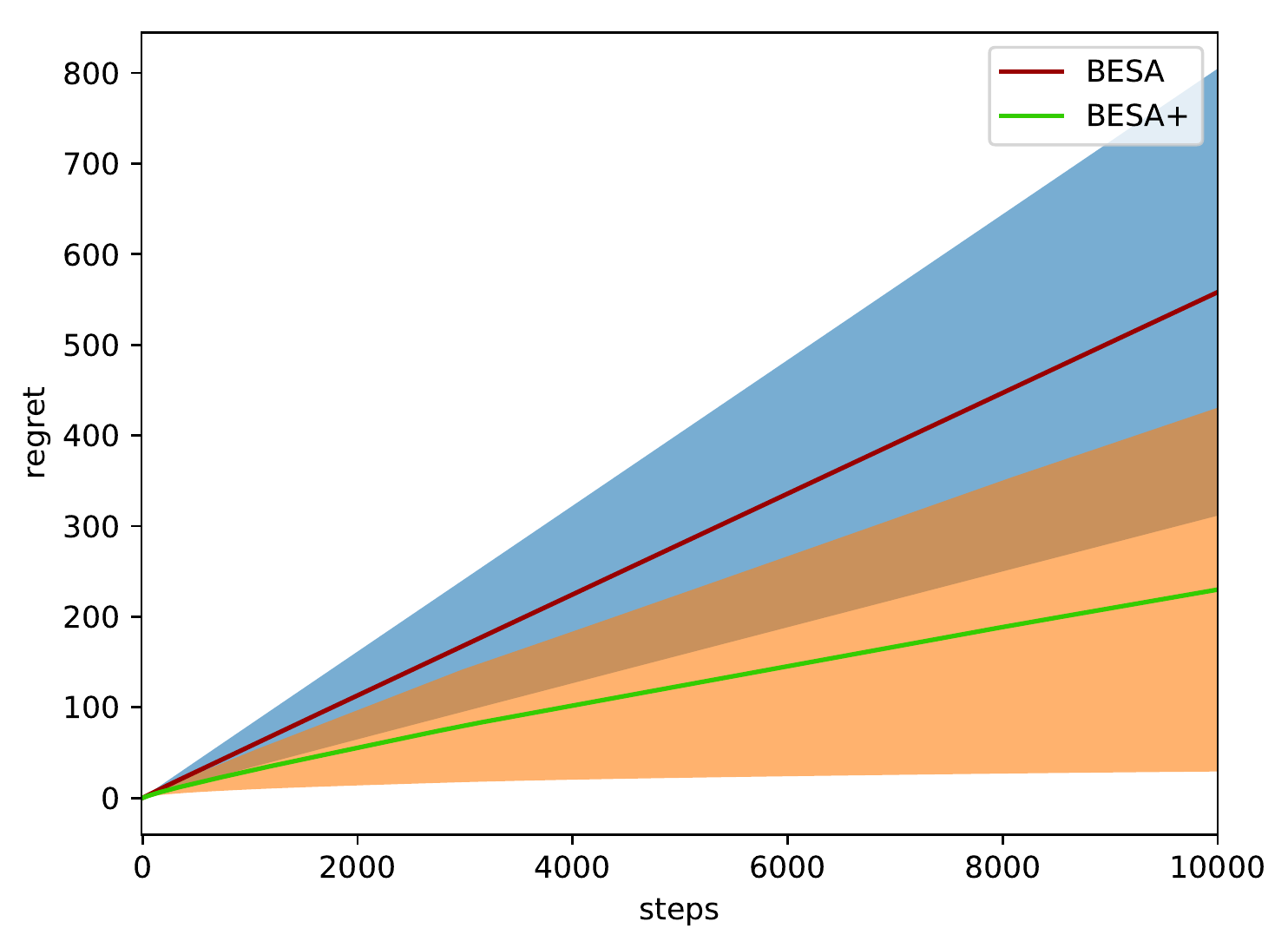}
	\caption{Result of accumulated expected regret for $r=0.4$}
	\label{Fig:Data2}
\end{figure}

\begin{figure}[!htb]
	\centering

	\includegraphics[width=0.7\linewidth]{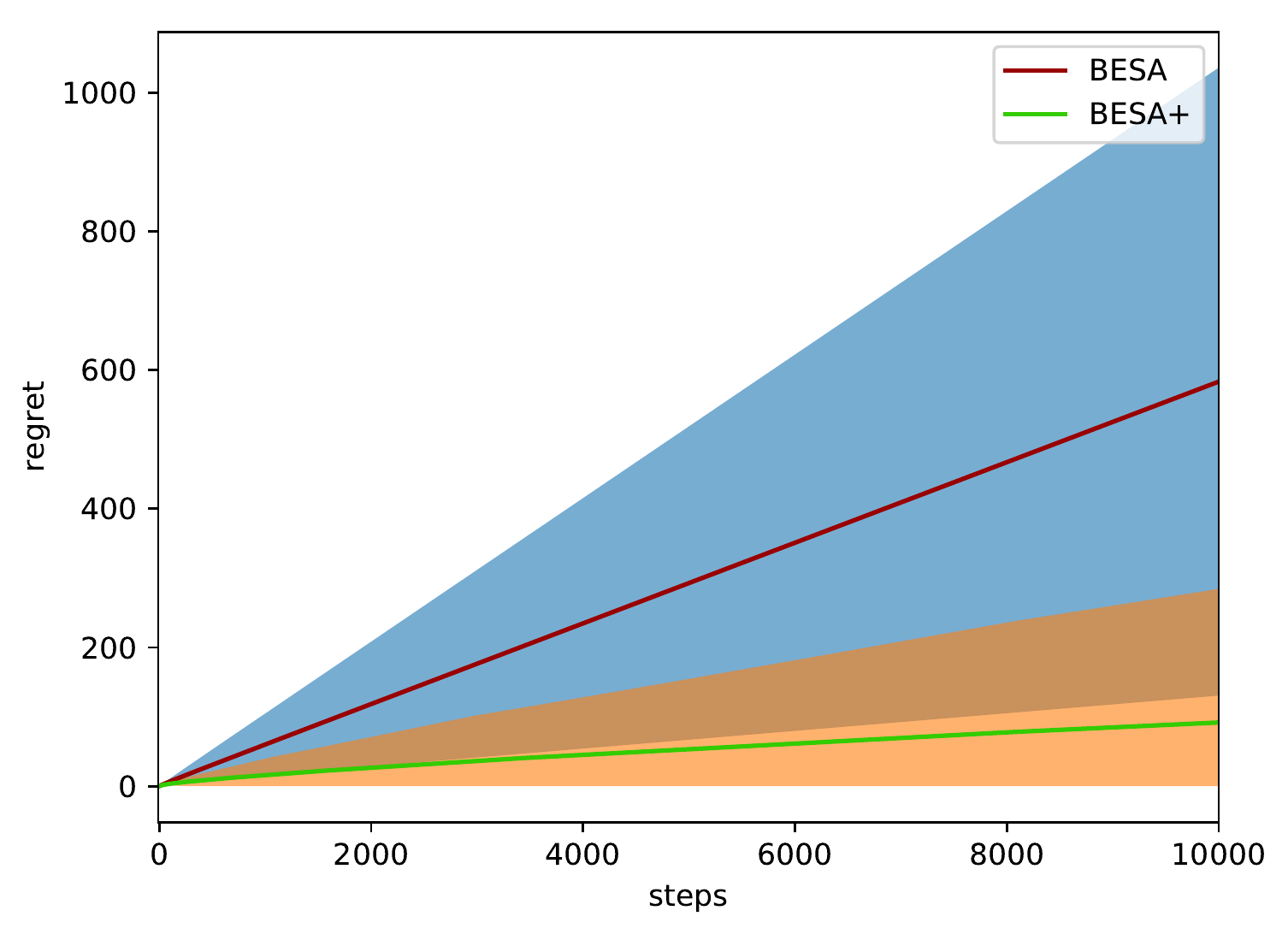}
	\caption{Result of accumulated expected regret for $r=0.3$}\label{Fig:Data2}
\end{figure}

\begin{figure}[!htb]
	\centering

	\centering
	\includegraphics[width=0.7\linewidth]{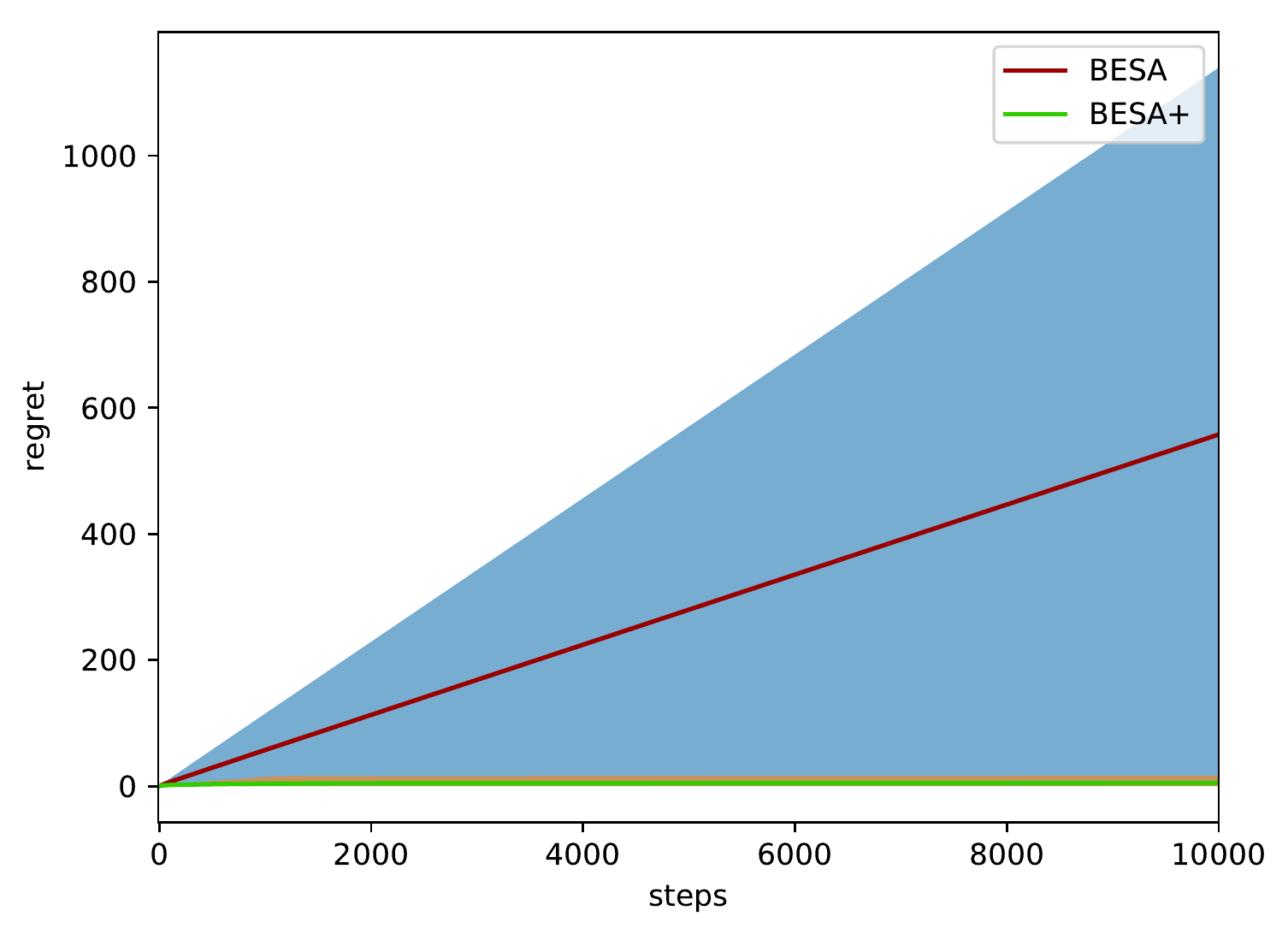}
	\caption{Result of accumulated expected regret for $r=0.2$}\label{Fig:Data2}
\end{figure}

\subsection{Conditional value at risk safety value function}

As discussed in \cite{galichet2013exploration}, in some situations, we need to limit the exploration of risky arms. Examples include financial investment where inverters may tend to choose risk-averse kind of strategy. Using conditional value at risk as a risk measure is one of the approaches to achieve this goal. Informally, conditional value at risk level $\alpha$ is defined as the expected values of the quantiles of reward distribution where the probability of the occurrence of values inside this quantile is less than or equal to $\alpha$. More formally:
\begin{align}\label{cvar}
CVaR_\alpha=\mathbb{E}[X|X<v_\alpha]
\end{align}
where in \eqref{cvar}, $v_\alpha=\argmax_\beta\{\mathbb{P}(X<\beta)\leq\alpha\}$.
To estimate \eqref{cvar}, we have used the estimation measure introduced by \cite{chen2007nonparametric}. This estimation is also employed in \cite{galichet2013exploration} work to derive their MARAB algorithm. Here, we have used this estimation for the Conditional value at risk safety value function which is the regret measure for this problem. Our environment consists of 20 arms where each arm reward distribution is the truncated Gaussian mixture consisting of four Gaussian distribution with equal probability. The reward of arms are restricted to the interval $[0,1]$. To make the environment more complex, the mean and standard deviation of arms are sampled uniformly from the interval $[0,1]$ and $[0.5,1]$ respectively. The experiments are carried out for $\alpha=10\%$. For MARAB algorithm, we have used grid search and set the value $C=1$. The figures 4, 5 depict the results of the run for ten experiments. It is noticeable that in both figures BESA+ has a lower variance in experiments.
\begin{figure}[!htb]
	\centering
	\includegraphics[width=0.7\linewidth]{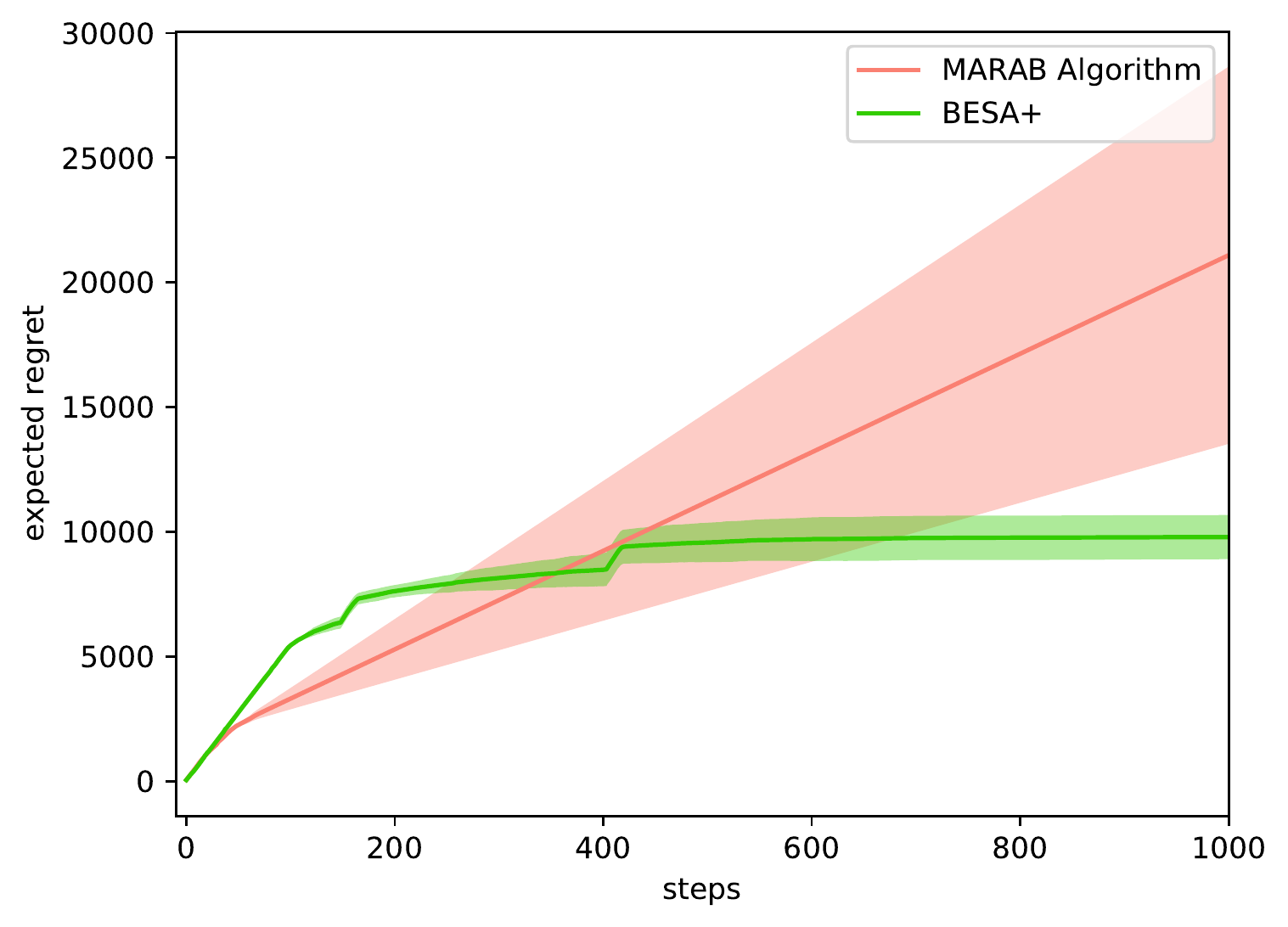}
	\caption{Accumulated regret figure. The safety value function here is conditional value at risk.}
	\label{Fig:Data1} 
	\centering
	\includegraphics[width=0.7\linewidth]{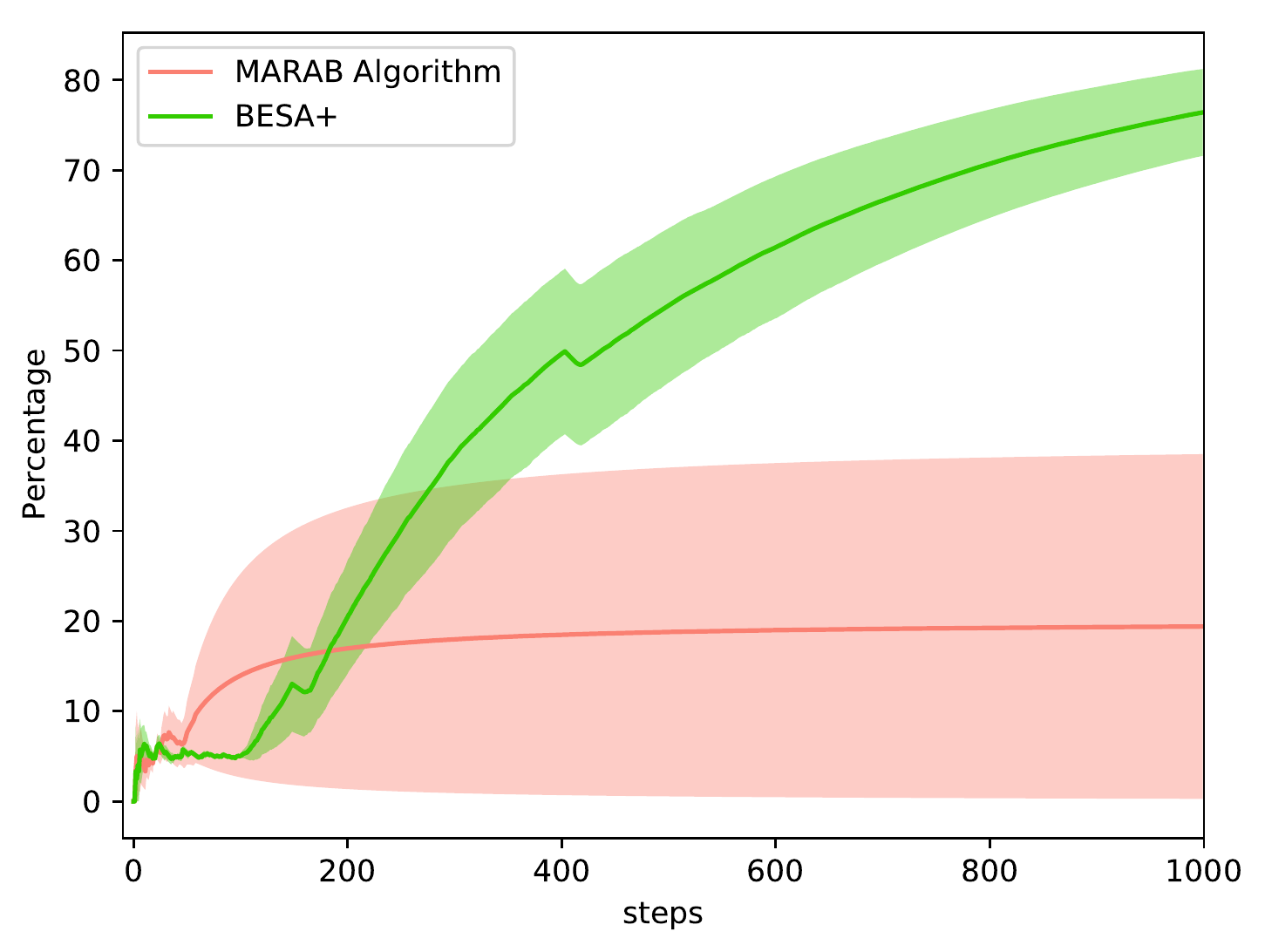}
	\caption{Percentage of optimal arm play figure. The safety value function here is conditional value at risk.}\label{Fig:Data2}
\end{figure}

\subsection{Mean-variance safety value function}

Next, we evaluated the performance of BESA+ with the regret definition provided by \cite{sani2012risk}. Here, we used the same 20 arms Gaussian mixture environment described in the previous section. We evaluated the experiments with $\rho=1$ which is the trade off factor between variance and the mean. The results of this experiment is depicted in figures 6, 7. The hyper-parameters used here for algorithms MV-LCB and ExpExp are based on what \cite{sani2012risk} suggests using. Again, we can see that BESA+ has a relatively small variance over 10 experiments.
\begin{figure}[!htb]
	\centering
	\includegraphics[width=0.7\linewidth]{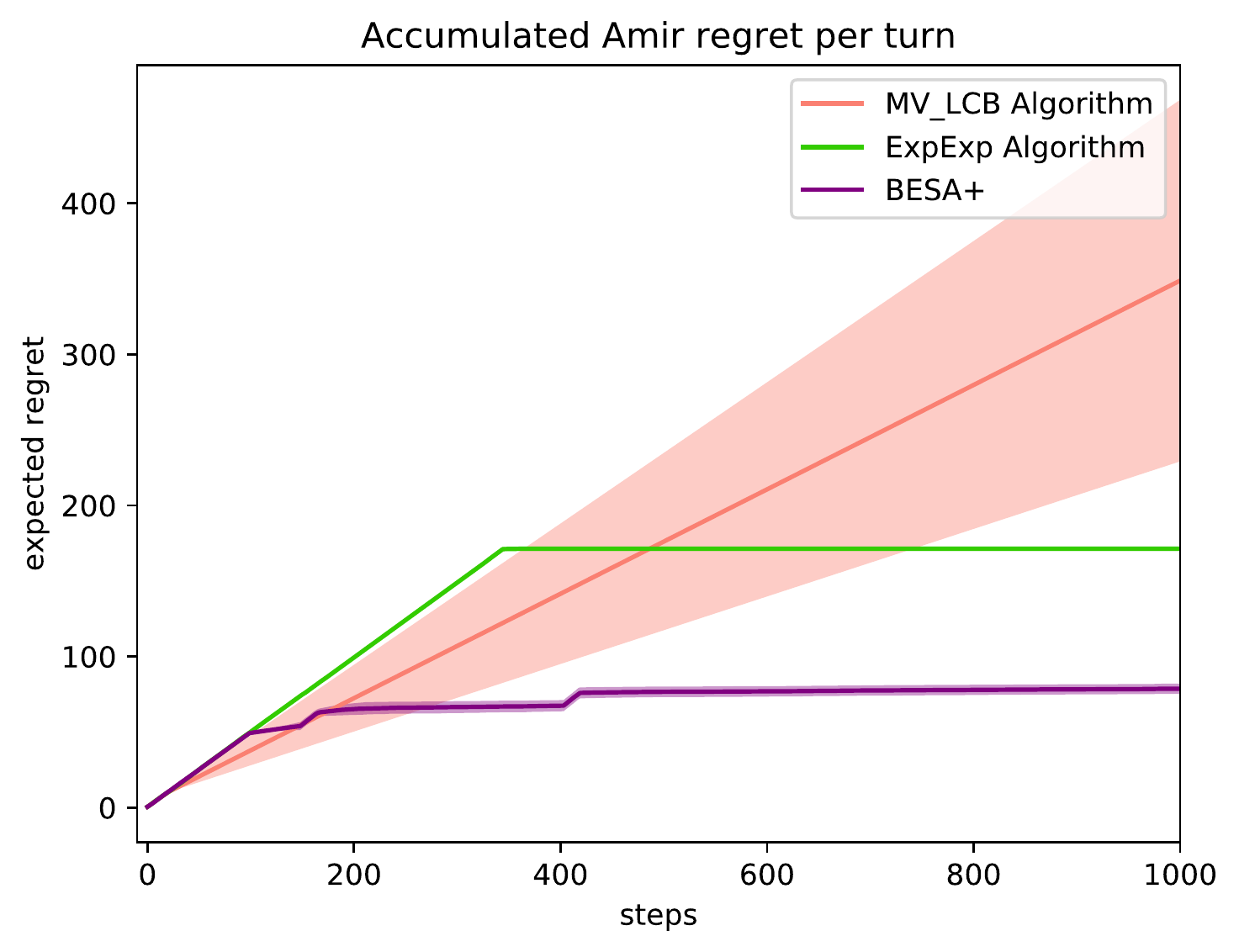}
	\caption{Accumulated regret figure. The safety value function here is mean-variance.}
	\label{Fig:Data1} 
	\centering
	\includegraphics[width=0.7\linewidth]{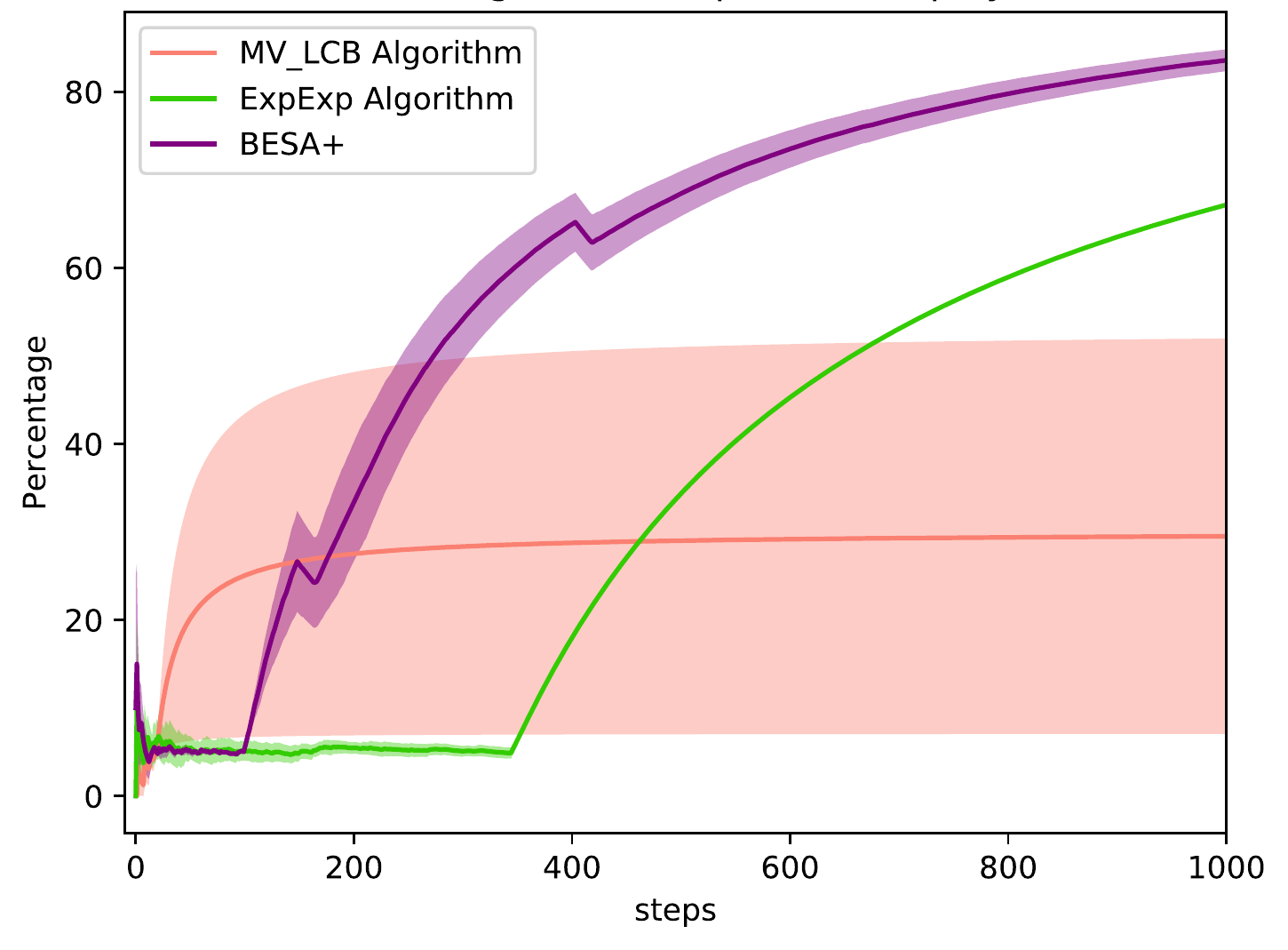}
	\caption{Percentage of optimal arm play figure. The safety value function here is mean-variance.}\label{Fig:Data2}
\end{figure}

\subsection{Real Clinical Trial Dataset }

Finally, we examined the performance of BESA+ against other methods (BESA, UCB1 , Thompson sampling, MV-LCB, and ExpExp) based on a real clinical dataset. This dataset includes the survival times of patients who were suffering from lung cancer \cite{ripley2013package}. Two different kinds of treatments (standard treatment and test treatment) were applied to them and the results are based on the number of days the patient survived after receiving one of the treatments. For the purpose of illustration and simplicity, we assumed non-informative censoring and equal follow-up times in both treatment groups. As the experiment has already been conducted, to apply bandit algorithms, each time a treatment is selected by a bandit algorithm, we sampled uniformly from the recorded results of the patients whom received that selected treatment and used the survival time as the reward signal. Figure 8 shows the distribution of treatment 1 and 2. We categorized the survival time into ten categories (category 1 showing the minimum survival time). It is interesting to notice that while treatment 2 has a higher mean than treatment 1 due to the effect of outliers, it has a higher level of variance compared to treatment 1. From figure 8 it is easy to deduce that treatment 1 has a more consistent behavior than treatment 2 and a higher number of patients who received treatment 2 died early. That is why treatment 1 may be preferred over treatment 2 if we use the safety value function described in Example 1. In this regard, by setting $\rho=1$, treatment 1 has less expected mean-variance regret than treatment 2, and it should be ultimately favored by the learning algorithm. Figure 9  illustrates the performance of different bandit algorithms. It is easy to notice that BESA+ has relatively better performance than all the other ones.

\begin{figure}[!htb]
	\centering
	\includegraphics[width=0.7\linewidth]{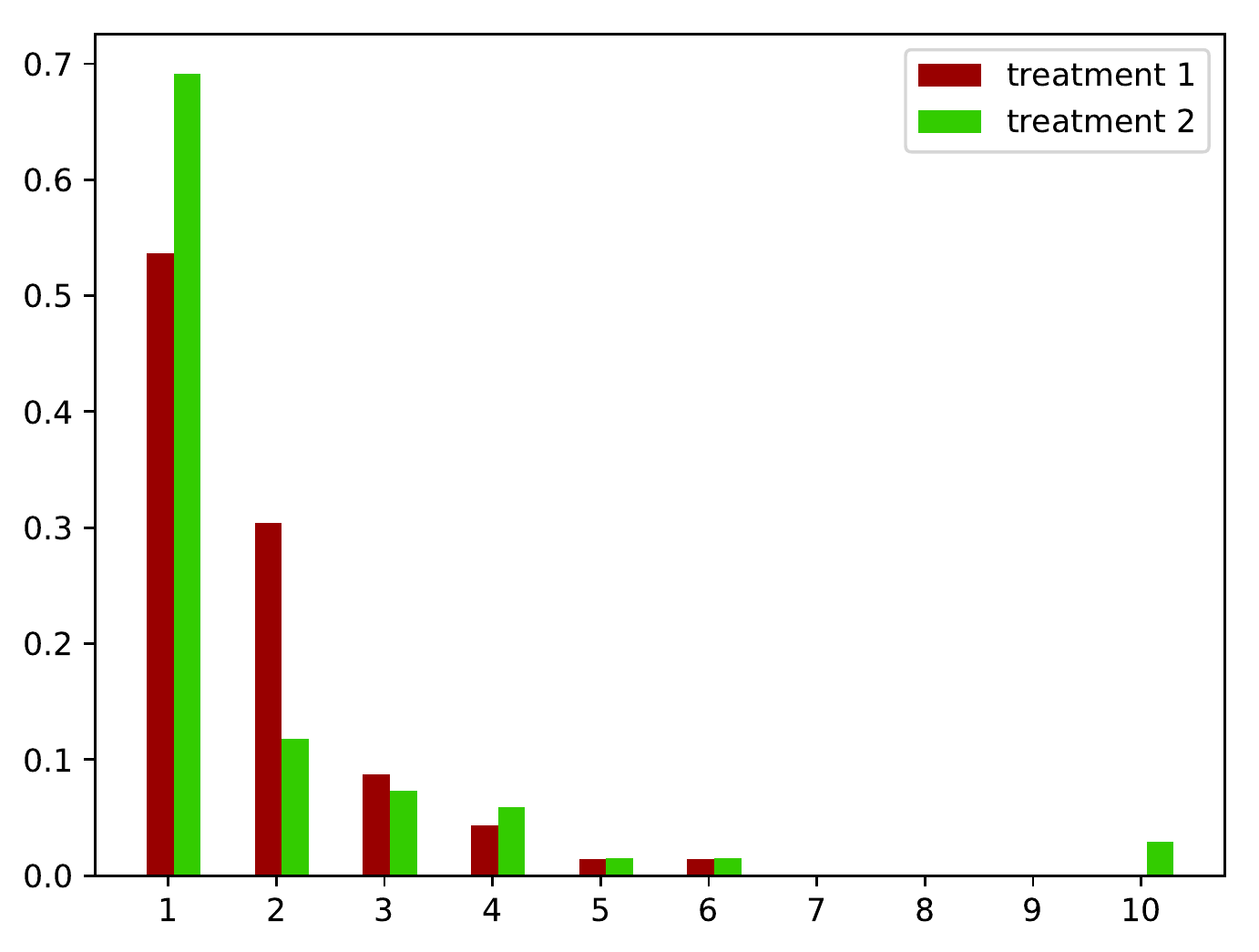}
	\caption{Distribution graph}
	\label{Fig:Data1} 
	\centering
	\includegraphics[width=0.7\linewidth]{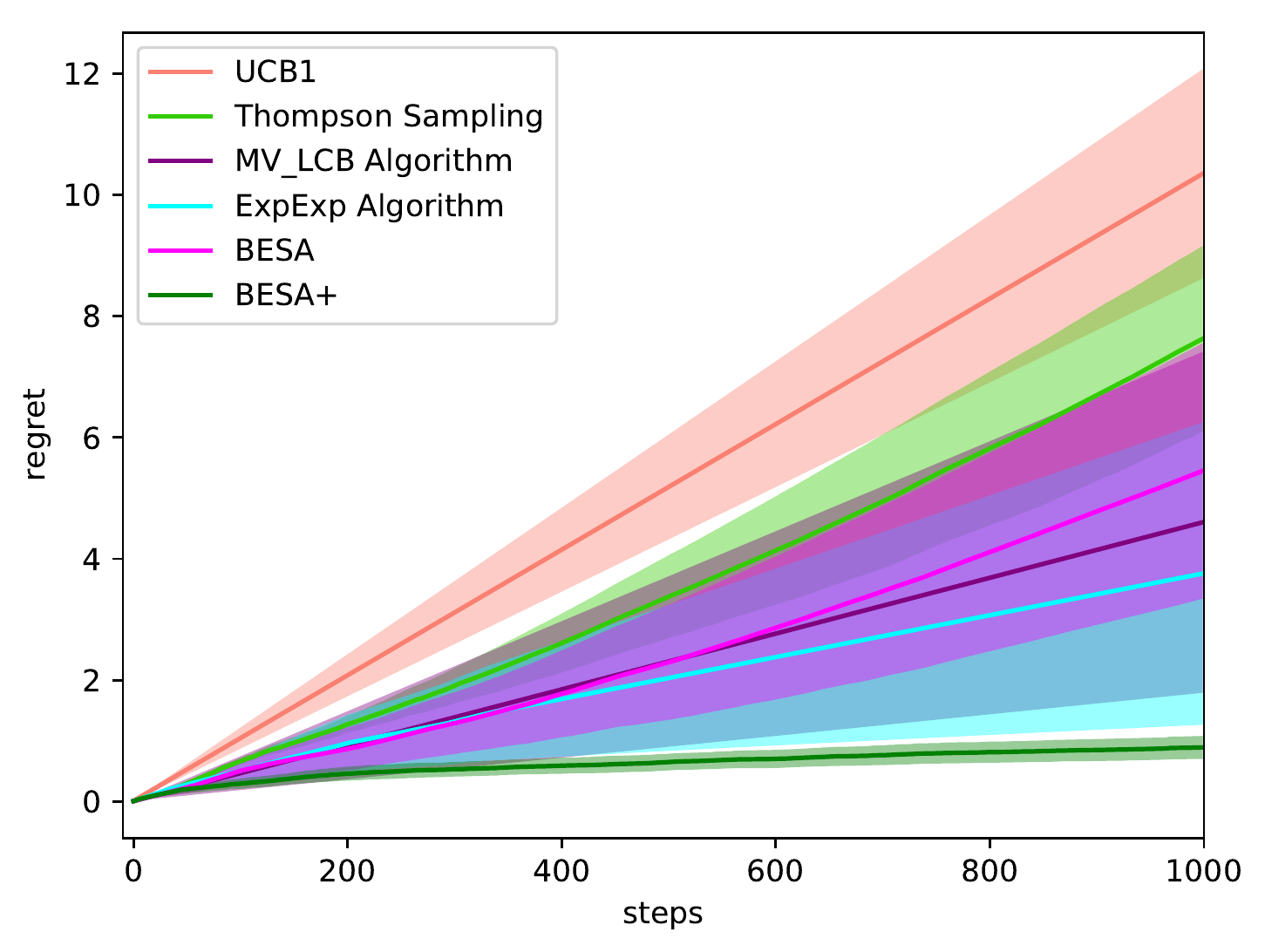}
	\caption{Accumulated consistency-aware regret}
\end{figure}

\subsection{Web Application Simulator }

As discussed earlier, for this project, we have developed a web application simulator for bandit problem where users can create their customized environment and run experiments online. Usually, research works provide limited experiments to testify their method. We tried to overcome this problem by developing this web application where the user can select number of arms and change their reward distribution. Then the web application will send the input to the web-server and show the results to the user by providing regret figures and additional figures describing the way algorithms have chosen arms over time.  This software can be used as a benchmark for future bandit research and it is open sourced for future extension. The link is going to be provided after the review process.
%    \item Consider truncated Gaussian and Beta settings.
%    \item Compare with RA-UCB~\citep{Maillard2013}
%    \item Compare with MV-LCB and ExpExp~\citep{Sani2012}
%\end{itemize}

% !TeX root = IJCAIpaper.tex

\section{Conclusion and future work}

In this paper, we  developed a modular safety-aware regret definition which can be used to define the function of interest as a safety measure. We also modified the BESA algorithm and equipped it with new features to solve  modular safety-aware regret bandit problems. We then computed the asymptotic regret of BESA+ and showed that it can perform like an admissible policy if the safety value function satisfies a mild assumption. Finally, we depicted the performance of BESA+ on the regret definition of previous works and showed that it can have better performance in most cases.

It is still interesting to investigate whether we can find better bounds for BESA+ algorithm with modular safety-aware regret definition. Another interesting path would be to research if we can define similar safety-aware regret definition for broader reinforcement learning problems including MDP environments.

\section{Acknowledgment}

 We would like to thank Audrey Durand for her comments and insight on this project. We also thank department of family medicine of McGill University and CIHR for their generous support during this project. 

\bibliographystyle{named}
\bibliography{aaai_refrence.bib}

\begin{thebibliography}{}

\bibitem[\protect\citeauthoryear{Agrawal and Goyal}{2013}]{agrawal2013further}
Shipra Agrawal and Navin Goyal.
\newblock Further optimal regret bounds for thompson sampling.
\newblock In {\em Artificial Intelligence and Statistics}, pages 99--107, 2013.

\bibitem[\protect\citeauthoryear{Auer \bgroup \em et al.\egroup
  }{2002}]{auer2002finite}
Peter Auer, Nicolo Cesa-Bianchi, and Paul Fischer.
\newblock Finite-time analysis of the multiarmed bandit problem.
\newblock {\em Machine learning}, 47(2-3):235--256, 2002.

\bibitem[\protect\citeauthoryear{Austin}{2011}]{austin2011introduction}
Peter~C Austin.
\newblock An introduction to propensity score methods for reducing the effects
  of confounding in observational studies.
\newblock {\em Multivariate behavioral research}, 46(3):399--424, 2011.

\bibitem[\protect\citeauthoryear{Baransi \bgroup \em et al.\egroup
  }{2014}]{Baransi2014}
Akram Baransi, Odalric-Ambrym Maillard, and Shie Mannor.
\newblock Sub-sampling for multi-armed bandits.
\newblock In {\em ECML-KDD}, pages 115--131, 2014.

\bibitem[\protect\citeauthoryear{Burnetas and
  Katehakis}{1996}]{burnetas1996optimal}
Apostolos~N Burnetas and Michael~N Katehakis.
\newblock Optimal adaptive policies for sequential allocation problems.
\newblock {\em Advances in Applied Mathematics}, 17(2):122--142, 1996.

\bibitem[\protect\citeauthoryear{Chapelle and Li}{2011}]{chapelle2011empirical}
Olivier Chapelle and Lihong Li.
\newblock An empirical evaluation of thompson sampling.
\newblock In {\em Advances in neural information processing systems}, pages
  2249--2257, 2011.

\bibitem[\protect\citeauthoryear{Chen}{2007}]{chen2007nonparametric}
Song~Xi Chen.
\newblock Nonparametric estimation of expected shortfall.
\newblock {\em Journal of financial econometrics}, 6(1):87--107, 2007.

\bibitem[\protect\citeauthoryear{El-Yaniv and
  Pechyony}{2009}]{el2009transductive}
Ran El-Yaniv and Dmitry Pechyony.
\newblock Transductive rademacher complexity and its applications.
\newblock {\em Journal of Artificial Intelligence Research}, 35(1):193, 2009.

\bibitem[\protect\citeauthoryear{Galichet \bgroup \em et al.\egroup
  }{2013}]{galichet2013exploration}
Nicolas Galichet, Michele Sebag, and Olivier Teytaud.
\newblock Exploration vs exploitation vs safety: Risk-aware multi-armed
  bandits.
\newblock In {\em Asian Conference on Machine Learning}, pages 245--260, 2013.

\bibitem[\protect\citeauthoryear{Garc{\i}a and
  Fern{\'a}ndez}{2015}]{garcia2015comprehensive}
Javier Garc{\i}a and Fernando Fern{\'a}ndez.
\newblock A comprehensive survey on safe reinforcement learning.
\newblock {\em Journal of Machine Learning Research}, 16(1):1437--1480, 2015.

\bibitem[\protect\citeauthoryear{Kuleshov and
  Precup}{2014}]{kuleshov2014algorithms}
Volodymyr Kuleshov and Doina Precup.
\newblock Algorithms for multi-armed bandit problems.
\newblock {\em arXiv preprint arXiv:1402.6028}, 2014.

\bibitem[\protect\citeauthoryear{Lai and Robbins}{1985}]{lai1985asymptotically}
Tze~Leung Lai and Herbert Robbins.
\newblock Asymptotically efficient adaptive allocation rules.
\newblock {\em Advances in applied mathematics}, 6(1):4--22, 1985.

\bibitem[\protect\citeauthoryear{Maillard}{2013}]{Maillard2013}
Odalric-Ambrym Maillard.
\newblock Robust risk-averse stochastic multi-armed bandits.
\newblock In {\em ICML}, pages 218--233, 2013.

\bibitem[\protect\citeauthoryear{Ripley \bgroup \em et al.\egroup
  }{2013}]{ripley2013package}
Brian Ripley, Bill Venables, Douglas~M Bates, Kurt Hornik, Albrecht Gebhardt,
  David Firth, and Maintainer~Brian Ripley.
\newblock Package ‘mass’.
\newblock {\em Cran R}, 2013.

\bibitem[\protect\citeauthoryear{Robbins}{1985}]{robbins1985some}
Herbert Robbins.
\newblock Some aspects of the sequential design of experiments.
\newblock In {\em Herbert Robbins Selected Papers}, pages 169--177. Springer,
  1985.

\bibitem[\protect\citeauthoryear{Sani \bgroup \em et al.\egroup
  }{2012a}]{sani2012risk}
Amir Sani, Alessandro Lazaric, and R{\'e}mi Munos.
\newblock Risk-aversion in multi-armed bandits.
\newblock In {\em Advances in Neural Information Processing Systems}, pages
  3275--3283, 2012.

\bibitem[\protect\citeauthoryear{Sani \bgroup \em et al.\egroup
  }{2012b}]{Sani2012}
Amir Sani, Alessandro Lazaric, and R{\'e}mi Munos.
\newblock Risk-aversion in multi-armed bandits.
\newblock In {\em NIPS}, pages 3275--3283, 2012.

\bibitem[\protect\citeauthoryear{Sutton and
  Barto}{1998}]{sutton1998reinforcement}
Richard~S Sutton and Andrew~G Barto.
\newblock {\em Reinforcement learning: An introduction}, volume~1.
\newblock MIT press Cambridge, 1998.

\bibitem[\protect\citeauthoryear{Tolstikhin}{2017}]{tolstikhin2017concentration}
IO~Tolstikhin.
\newblock Concentration inequalities for samples without replacement.
\newblock {\em Theory of Probability \& Its Applications}, 61(3):462--481,
  2017.

\end{thebibliography}


\begin{thebibliography}{1}

\bibitem{Baransi2014}
Akram Baransi, Odalric-Ambrym Maillard, and Shie Mannor.
\newblock Sub-sampling for multi-armed bandits.
\newblock In {\em ECML-KDD}, pages 115--131, 2014.

\bibitem{Bardenet2015}
R{\'e}mi Bardenet and Odalric-Ambrym Maillard.
\newblock Concentration inequalities for sampling without replacement.
\newblock {\em Bernoulli}, 21(3):1361--1385, 2015.

\end{thebibliography}


\begin{thebibliography}{}

\end{thebibliography}


\begin{thebibliography}{1}

\bibitem{Baransi2014}
Akram Baransi, Odalric-Ambrym Maillard, and Shie Mannor.
\newblock Sub-sampling for multi-armed bandits.
\newblock In {\em ECML-KDD}, pages 115--131, 2014.

\bibitem{el2009transductive}
Ran El-Yaniv and Dmitry Pechyony.
\newblock Transductive rademacher complexity and its applications.
\newblock {\em Journal of Artificial Intelligence Research}, 35(1):193, 2009.

\bibitem{tolstikhin2017concentration}
IO~Tolstikhin.
\newblock Concentration inequalities for samples without replacement.
\newblock {\em Theory of Probability \& Its Applications}, 61(3):462--481,
  2017.

\end{thebibliography}


\begin{thebibliography}{10}

\bibitem{agrawal2013further}
Shipra Agrawal and Navin Goyal.
\newblock Further optimal regret bounds for thompson sampling.
\newblock In {\em Artificial Intelligence and Statistics}, pages 99--107, 2013.

\bibitem{auer2002finite}
Peter Auer, Nicolo Cesa-Bianchi, and Paul Fischer.
\newblock Finite-time analysis of the multiarmed bandit problem.
\newblock {\em Machine learning}, 47(2-3):235--256, 2002.

\bibitem{austin2011introduction}
Peter~C Austin.
\newblock An introduction to propensity score methods for reducing the effects
  of confounding in observational studies.
\newblock {\em Multivariate behavioral research}, 46(3):399--424, 2011.

\bibitem{Baransi2014}
Akram Baransi, Odalric-Ambrym Maillard, and Shie Mannor.
\newblock Sub-sampling for multi-armed bandits.
\newblock In {\em ECML-KDD}, pages 115--131, 2014.

\bibitem{burnetas1996optimal}
Apostolos~N Burnetas and Michael~N Katehakis.
\newblock Optimal adaptive policies for sequential allocation problems.
\newblock {\em Advances in Applied Mathematics}, 17(2):122--142, 1996.

\bibitem{carandini2004amplification}
Matteo Carandini.
\newblock Amplification of trial-to-trial response variability by neurons in
  visual cortex.
\newblock {\em PLoS biology}, 2(9):e264, 2004.

\bibitem{chapelle2011empirical}
Olivier Chapelle and Lihong Li.
\newblock An empirical evaluation of thompson sampling.
\newblock In {\em Advances in neural information processing systems}, pages
  2249--2257, 2011.

\bibitem{corbin2018three}
Laurie-Ann Corbin-Berrigan, Kristina Kowalski, Jocelyn Faubert, Brian Christie,
  and Isabelle Gagnon.
\newblock Three-dimensional multiple object tracking in the pediatric
  population: the neurotracker and its promising role in the management of mild
  traumatic brain injury.
\newblock {\em NeuroReport}, 29(7):559--563, 2018.

\bibitem{galichet2013exploration}
Nicolas Galichet, Michele Sebag, and Olivier Teytaud.
\newblock Exploration vs exploitation vs safety: Risk-aware multi-armed
  bandits.
\newblock In {\em Asian Conference on Machine Learning}, pages 245--260, 2013.

\bibitem{garcia2015comprehensive}
Javier Garc{\i}a and Fernando Fern{\'a}ndez.
\newblock A comprehensive survey on safe reinforcement learning.
\newblock {\em Journal of Machine Learning Research}, 16(1):1437--1480, 2015.

\bibitem{kuleshov2014algorithms}
Volodymyr Kuleshov and Doina Precup.
\newblock Algorithms for multi-armed bandit problems.
\newblock {\em arXiv preprint arXiv:1402.6028}, 2014.

\bibitem{lai1985asymptotically}
Tze~Leung Lai and Herbert Robbins.
\newblock Asymptotically efficient adaptive allocation rules.
\newblock {\em Advances in applied mathematics}, 6(1):4--22, 1985.

\bibitem{Maillard2013}
Odalric-Ambrym Maillard.
\newblock Robust risk-averse stochastic multi-armed bandits.
\newblock In {\em ICML}, pages 218--233, 2013.

\bibitem{ripley2013package}
Brian Ripley, Bill Venables, Douglas~M Bates, Kurt Hornik, Albrecht Gebhardt,
  David Firth, and Maintainer~Brian Ripley.
\newblock Package ‘mass’.
\newblock {\em Cran R}, 2013.

\bibitem{robbins1985some}
Herbert Robbins.
\newblock Some aspects of the sequential design of experiments.
\newblock In {\em Herbert Robbins Selected Papers}, pages 169--177. Springer,
  1985.

\bibitem{sani2012risk}
Amir Sani, Alessandro Lazaric, and R{\'e}mi Munos.
\newblock Risk-aversion in multi-armed bandits.
\newblock In {\em Advances in Neural Information Processing Systems}, pages
  3275--3283, 2012.

\bibitem{Sani2012}
Amir Sani, Alessandro Lazaric, and R{\'e}mi Munos.
\newblock Risk-aversion in multi-armed bandits.
\newblock In {\em NIPS}, pages 3275--3283, 2012.

\bibitem{sutton1998reinforcement}
Richard~S Sutton and Andrew~G Barto.
\newblock {\em Reinforcement learning: An introduction}, volume~1.
\newblock MIT press Cambridge, 1998.

\end{thebibliography}

\end{document}

% --- supplement: appendix1.tex ---

\title{Appendix}
\date{}
\maketitle

\appendix

%!TEX root = ./appendix1.tex

\section{Proof of Theorem 0.1}

We consider the two-action (two-armed) setting $\cA = \{\star, a\}$. Recall that $\star := \argmax_{i \in \{\star, a\}} v_i$ is the optimal action and let $\Delta := v_\star - v_a$ denote the suboptimal gap.
We first rewrite the cumulative pseudo-regret up to time $T$ as
\begin{align}
    \kR_T = \Delta \Esp[N_{a,T}].
\end{align}

\paragraph{Notation:} We use the notation $\hat q_{i,t}$ to denote the empirical estimate of the quantity $q$ associated with action $i$ given previous observations.  We also use the notation $\tilde q_{i,t}$ to denote the empirical estimate of the quantity $q$ associated with action $i$ computed on a subsample without replacement of previous observations. Also, $N_{a,T}$ denotes the number of time arm $a$ has been selected by the algorithm up to time $T$.

 Our proof is going to be to some extent similar to \cite{Baransi2014}. We see that the regret can be bounded by controlling the expected number of suboptimal plays, which we decompose as follows for any $u > 0$:
\begin{align} \label{regret_expectation}
    \Esp[N_{a,T}]
    &= \Esp \left[ \sum_{t=1}^T \Big( \indic{\tilde v_{\star,t} \leq \tilde v_{a,t} \cap N_{\star,t} > u} \cup \indic{\tilde v_{\star,t} \leq \tilde v_{a,t} \cap N_{\star,t} \leq u} \Big) \right] \nonumber \\
    &\leq \Esp \left[ \sum_{t=1}^T \Big( \indic{\tilde v_{\star,t} \leq \tilde v_{a,t} \cap N_{\star,t} > u} \cup \indic{N_{\star,t} \leq u} \Big) \right] \nonumber \\
    &= \sum_{t=1}^T \Pr \left[ \tilde v_{\star,t} \leq \tilde v_{a,t} \cap N_{\star,t} > u \right] + \sum_{t=1}^T \Pr\left[ N_{\star,t} \leq u \right].
\end{align}

As it can be seen, \eqref{regret_expectation} consists of two terms where one term depends on  $N_{\star,t} > u $ and the other one depends on $N_{\star,t} \leq u$. In this proof, we will show the probability of the first term and then prove that the probability of $N_{\star,t} \leq u$ is controllable.

\subsection{Number of play of suboptimal arm}

Consider a sequence of size $N_{a,t}$, $X_{1:N_{a,t}}^a:=\{X_1^a, ..., X_{N_{a,t}}^a\}$, consisting of observations from arm $a$ up to time $t$.
% Assume $\hat{\mu}$ is the empirical mean of this sequence. In this regard,  consider the following sequence as the transformed version of the initial  sequence $X_{1:N_{a,t}}^a$:
% \begin{align}\label{magic}
% V_{1:N_{a,t}}^a:=\{X_1^a -\frac{N_{a,t}}{N_{a,t}-1}\rho (X_1^a -\hat{\mu})^2, ..., X_{N_{a,t}}^a -\frac{N_{a,t}}{N_{a,t}-1}\rho (X_{N_{a,t}}^a -\hat{\mu})^2\}
% \end{align}
% In this regard, the value calculated in the generalized BESA algorithm for arm $a$ is
% \begin{align}
% \hat{v}_{a,N_{a,t}}=\frac{1}{N_{a,t}}\sum_{i=1}^{N_{a,t}} X_i^a -\frac{N_{a,t}}{N_{a,t}-1}\rho (X_i^a -\hat{\mu})^2=\frac{1}{N_{a,t}}\sum_{v_i \in V_{1:N_{a,t}}^a} v_i
% \end{align}
 % which is an unbiased estimate of the $v_a=\mu-\rho\sigma^2_a$. Thus, we can apply McDiarmid inequality \citep{tolstikhin2017concentration} for the variables of the sequence \eqref{magic} and have an upper bound on its probability of error from the true value $v^a$:
Using Lemma~\ref{lem:mcdiarmid}, we have that
\begin{align}
    \label{eqn:bound:sample}
    \Pr\left[ \hat v^a_{N_{a,t}} \leq v^a - \gamma \sqrt{\frac{\log(1/\delta)}{2N_{a,t}}} \right] \leq \delta.
\end{align}
Similarly, using Lemma~\ref{lem:mcdiarmid:without_replacement} for the subsample of $N = \min(N_{a,t}, N_{\star,t})$ without replacement, we have that
\begin{align}
    \label{eqn:bound:subsample}
    \Pr\left[ \tilde v_N^a \leq \hat v_{N_{a,t}}^a - \gamma \sqrt{\frac{\min(N, N_{a,t}-N) \log(1/\delta)}{2N^2}} \right] \leq \delta.
\end{align}

By definition of the BESA algorithm, it holds that $a_t = a$ implies
\begin{align*}
    \tilde v_{\star,t} &\leq \tilde v_{a,t} \\
    \tilde v_{\star,t} + \hat v_{\star,t} - \hat v_{\star,t} + v_\star - v_\star &\leq \tilde v_{a,t} + \hat v_{a,t} - \hat v_{a,t} + v_a - v_a \\
    \Delta &\leq (\tilde v_{a,t} - \hat v_{a,t}) + (\hat v_{a,t} - v_a) + (\hat v_{\star,t} - \tilde v_{\star,t}) + (v_{\star} - \hat v_{\star,t}).
\end{align*}

Now, using Eq.~\ref{eqn:bound:sample} and~\ref{eqn:bound:subsample}, we have with probability higher than $1-2\delta$ that
\begin{align}
    \label{eqn:delta}
    \Delta &\leq
    \gamma \sqrt{\frac{\min(N, N_{a,t}-N) \log(1/\delta)}{2N^2}} +
    \gamma \sqrt{\frac{\log(1/\delta)}{2N_{a,t}}} \nonumber \\
    &+ \gamma \sqrt{\frac{\min(N, N_{\star,t}-N) \log(1/\delta)}{2N^2}} +
    \gamma \sqrt{\frac{\log(1/\delta)}{2N_{\star,t}}} \nonumber \\ 
    &\leq
    2\gamma \sqrt{\frac{ \log(1/\delta)}{2N}} +
    \gamma \sqrt{\frac{\log(1/\delta)}{2N_{a,t}}}  +\gamma \sqrt{\frac{\log(1/\delta)}{2N_{\star,t}}}    
\end{align}

We have two states here. Whether $N_{a,t}<N_{\star,t}$ or $N_{\star,t} \leq N_{a,t}$ Let us first consider the situation where $N_{a,t}<N_{\star,t}$, such that $N = N_{a,t}$. Under this circumstance, Eq.~\ref{eqn:delta}:

\begin{align} \label{delta_1}
    \Delta &\leq 4\gamma \sqrt{\frac{\log(1/\delta)}{2N_{a,t}}} 
    \end{align}
By putting $\delta=\frac{1}{t}$, we can deduce from \eqref{delta_1} that:
\begin{align}\label{delta_2}
N_{a,t} &\leq \frac{16\gamma^2} {\Delta^2} \log(t)
\end{align}

In other words, under this circumstance we have $N_{a,t}=O(\log(t))$ which illustrates that we can control the number of suboptimal arm play when $N_{a,t}<N_{\star,t}$.

Now, by following similar steps for the other case when $N_{a,t}\geq N_{\star,t}$, we have  :
\begin{align}\label{delta_2}
N_{\star,t} &\leq \frac{16\gamma^2} {\Delta^2} \log(t)
\end{align}

As a result, if $N_{\star,t} $ is played played more than $u_t=\frac{16\gamma^2} {\Delta^2} \log(t)$, the algorithm chooses suboptimal arm $O(\log(t))$ with probability $1-\frac{2}{t}$. 

In the next subsection, we will illustrate how we can upper-bound the probability of $N_{\star,t} <u_t$ and based on that we will compute the final expected regret of BESA+ algorithm.

\subsection{Probability of not selecting optimal arm frequently}

Following the idea of \cite{Baransi2014}, let $t_j$ denote the time step at which action $\star$ is selected for the $j$-th time, with $t_0 = 0$, and $\tau_j = t_{j+1} - t_j - 1$ being the number of time steps between the $j$-th and the $(j+1)$-th play of action $\star$. Note that $\tau_0 \leq 2$ since all actions are played once during the initialization step. Assume that $t\geq 3u_t$ we then have:
\begin{align}\label{event_1}
    \Pr [N_{\star,t} \leq u_t]
    &= \Pr[N_{a,t} > t - u_t] \\\nonumber
    &\leq   \Pr[\exists j \in \{1,\dots,\floor{u_t} \}: \tau_j > t /u_t-1] \\\nonumber
    &\leq  \sum_{j=1}^{\floor{u_t}} \Pr[\tau_j > t /u_t-1] \\
\end{align}

Now if we show that the the probability of the event $E_j=\{\tau_j > t /u_t-1\}$ is ignorable, we can hopefully deduce that it is unlikely that the BESA+ algorithm selects the optimal arm less than $u_t$ time and probably we  can control the final expected regret. In this regard, it is easy to see that:
\begin{align}\label{event_2}
    \Pr [E_j]
%    &= \Pr[N_{a,t} > t - u_t] \\\nonumber
    &\leq   \Pr[\forall j \in \{t_j,\dots,t_j+\floor{t/u_t }-1\}: a_s=a] \\\nonumber
\end{align}

%Now if we split the time interval $M_{j,t}=[t_j,t_j+\ceil{t/u_t-1}]$ to two sets:
%\begin{align*}
%M_{j,t}^1&=[t_j,t_j+\ceil{\frac{t/u_t-1}{2}}]\\
%M_{j,t}^2&=[t_j+\ceil{\frac{t/u_t-1}{2}},t_j+t/u_t-1]\\
%\end{align*} 
%As action $a$ is chosen repeatedly during $M_{j,t}^1$, for all time step $s$ in $M_{j,t}^2$ we have:
%\begin{align}
%N_{a,s}=s-j\geq t_j+\ceil{\frac{t/u_t-1}{2}}-j \geq \ceil{\frac{t/u_t-1}{2}}
%\end{align}
%In other words, when  $t\geq u_t(1+2u_t)$, for all $s \in M_{j,t}^2$ we have $N_{a,s}\geq u_t\geq j$. Accordingly, as mentioned in \citep{Baransi2014}, we need to define the quantity $n_{t,j}=\max \{\ceil{\frac{t/u_t-1}{2}},j\}$.
%
%Now, using the assumption that $t\geq u_t(1+2u_t)$ and previous arguments, we can find the probability of event $E_j$: 
%\begin{align}\nonumber \label{master_lemma}
%\Pr[E_j] &\leq \Pr[\forall s \in M_{j,t}: a_s=a]\\\nonumber
%&\leq \Pr[\forall s \in  M_{j,t}^2: a_s =a \cap N_{a,s} \geq n_{t,j}]\\\nonumber
%&\leq \Pr[\forall s \in  M_{j,t}^2: \hat{v}_{a,s}\left(I_s(N_{a,s},j)\right)>\hat{v}_{\star,s}\cap N_{a,s} \geq n_{t,j}]\\
%&\leq \Pr[\forall s \in  M_{j,t}^2: \hat{v}_{a,s}\left(I_s(s-j,j)\right)>\hat{v}_{\star,s} ]
%\end{align}
%where the last line of \eqref{master_lemma} stems from the fact that we have $N_{a,s}=s-j$ and as the number of times we have chosen arm $a$ is more than arm $\star$, we will subsample from arm $a$. We want to show that the probability of \eqref{master_lemma} is small when $v_a<v_\star$. 

Note that as in BESA+ algorithm, given the record (reward history) of an arm $a$, the sub-sampling process of arm $a$ is independent of the previous ones. Define the interval $M_{j,t}=[t_j,t_j+\floor{t/u_t}-1]$. 

By applying McDiarmid’s inequality (lemma B.1 and B.2) and assuming $t>c_{\Delta,\gamma}$ where $c_{\Delta,\gamma}>\exp(\frac{4\gamma^2\log(2)}{(v_a-v_\star)^2})$, we have:

\begin{align}\nonumber \label{master_lemma_2}
\Pr[E_j] &\leq \Pr[\forall s \in M_{j,t}: a_s=a]\\\nonumber
%&\leq \Pr[\forall s \in  M_{j,t}^2: \tilde{v}_{a,s}\left(I_s(s-j,j)\right)>\tilde{v}_{\star,s} ]\\\nonumber
&=\prod_{s=t_j}^{t_j+\floor{t/u_t}-1}\Pr[\tilde{v}_{a,s}\left(\cX_{a,s}(I_a(s-j,j))\right)>\tilde{v}_{\star,s} ]\\\nonumber
&=\prod_{s=t_j}^{t_j+\floor{t/u_t}-1}\Pr[\tilde{v}_{a,s}\left(\cX_{a,s}(I_a(s-j,j))\right)-\tilde{v}_{\star,s}> 0]\\\nonumber
&=\prod_{s=t_j}^{t_j+\floor{t/u_t}-1}\Pr[(\tilde{v}_{a,s}(\cX_{a,s}(I_a(s-j,j)))-v_a)-(\tilde{v}_{\star,s}-v_\star)> v_\star-v_a]\\\nonumber
&=\prod_{s=t_j}^{t_j+\floor{t/u_t}-1}\Pr\left[~\left[(\tilde{v}_{a,s}(\cX_{a,s}(I_a(s-j,j)))-\tilde{v}_{\star,s})-(\hat{v}_{a,s}-\hat{v}_{\star,s})\right]+\left[(\hat{v}_{a,s}-\hat{v}_{\star,s})-(v_a-v_\star)\right]>v_\star-v_a~\right]\\\nonumber
&\leq\prod_{s=t_j}^{t_j+\floor{t/u_t}-1}\left(\Pr\left[(\tilde{v}_{a,s}(\cX_{a,s}(I_a(s-j,j)))-\tilde{v}_{\star,s})-(\hat{v}_{a,s}-\hat{v}_{\star,s})>\frac{1}{2}(v_{\star}-v_{a})~\right]+ \Pr\left[(\hat{v}_{a,s}-\hat{v}_{\star,s})-(v_a-v_\star) >\frac{1}{2}(v_\star-v_a)~\right]\right) \\\nonumber
&\leq \prod_{s=t_j}^{t_j+\floor{t/u_t}-1} \exp \left(\frac{-j(v_a-v_\star)^2}{2\gamma^2}+\log(2)\right)~~~~~~&&\\\nonumber
&\leq \prod_{s=t_j}^{t_j+\floor{t/u_t}-1} \exp \left(\frac{-j(v_a-v_\star)^2}{4\gamma^2}
\right) ~~~~~~~~~~~~~~~~~~~~~~~~~~~~~~~~~~~~~~~~~~~~~~~~~~~~~ \text{since we have $j \geq \log(t)$}
\\
&= \exp \left(\frac{-j(\floor{t/u_t})(v_a-v_\star)^2}{4\gamma^2}
\right)
\end{align}

In order to make computations easier we define new notations here. We know that $u_t=m\log(t)$ where $m$ is a constant dependent on $v_a-v_\star$. Define $\omega=\frac{(v_a-v_\star)^2}{4\gamma^2}$. Also define $\kappa=\frac{\omega}{m}$.

Now, we have: 

\begin{align} \label{Sum_first}
\sum_{t=1}^T\Pr[N_{\star,t} \leq u_t]&\leq c_{\Delta,\gamma}+\sum_{t\geq c_{\Delta,\gamma}}^T \sum_{j=1}^{\floor{u_t}} \exp \left(-j\floor{\frac{t}{m\log(t)}}\omega \right) \nonumber \\
&\leq c_{\Delta,\gamma} +\sum_{t\geq c_{\Delta,\gamma}}^T e^{-\floor{\frac{t}{m\log(t)}}\omega} \frac{1-e^{-t\omega}}{1-e^{-\floor{\frac{t}{m\log(t)}}\omega}}  \nonumber \\
&\leq c_{\Delta,\gamma} +\sum_{t\geq c_{\Delta,\gamma}}^T Ce^{-\frac{\kappa t}{\log(t)}} (1-e^{-t\omega}) \nonumber \\
&\leq c_{\Delta,\gamma}+ \beta(T,\omega,C)
\end{align}

Where $C=\frac{e^\omega}{1-e^{-3\omega}}$ and $$\beta(T,\omega,C)=\sum_{t\geq c_{\Delta,\gamma}}^{T} Ce^{-\frac{\kappa t}{\log (t)}} (1-e^{-t\omega}) $$ It turns out that based on Lemma 2, $\beta(T,\omega,C)$ can be bounded by a harmonic series when $c_{\Delta,\gamma}= \max\{ \frac{\log(T)}{\kappa},\exp(\frac{4\gamma^2\log(2)}{(v_a-v_\star)^2}),3u_t\}$. As a result, \eqref{Sum_first} is upper bounded by $O(\log(T))$. Before digging into further analysis, let's consider the following Lemma:

\textbf{Lemma 1}: for all $x> 1$ we have:
$$\log(x)+\frac{1}{\log(x)} > \log(x+1)$$

\textbf{Proof:} Using the mean value theorem, we know that there exists $x<y<x+1$ such that:
$$\log(x+1)-\log(x)=\frac{1}{y}$$
As a result we have:
$$\log(x+1)-\log(x)<\frac{1}{x}$$
Now, it is sufficient to show that:
$$\frac{1}{\log(x)}\geq \frac{1}{x}$$ 
which holds for any $x>1$. \qedsymbol

\textbf{Lemma 2}: $\beta(T,\omega,C)$ can be upper bounded by a harmonic series when $c_{\Delta,\gamma}= \max\{ \frac{\log(T)}{\kappa},\exp(\frac{4\gamma^2\log(2)}{(v_a-v_\star)^2}),3u_t\}$ and we have:

\begin{align}
\label{beta_function}
\beta(T,\omega,C) =O(log(T))
\end{align}

\indent \textbf{Proof:}
We can write the $\beta(t,\omega,C)$ as follows:
\begin{align} \label{Sum_second}
\beta(t,\omega,C)&=\sum_{t\geq c_{\Delta,\gamma}}^T Ce^{-\frac{\kappa t}{\log(t)}} (1-e^{-t\omega})\nonumber\\
&\leq \sum_{t\geq  \log(T)/\kappa}^{\frac{(log(T))^2}{\kappa}} C e^{-\frac{\kappa t}{\log (t)}} (1-e^{-t\omega})+\sum_{t\geq \frac{(log(T))^2}{\kappa}}^{T} Ce^{-\frac{\kappa t}{\log (t)}} (1-e^{-t\omega})
\end{align}

Now we will study each sum in the RHS of  \eqref{Sum_second} separately. We need to show that the first sum is bounded by $O(\log(T))$. If we expand the first sum, we notice the following:

\begin{align}
\sum_{t\geq  \log(T)/\kappa}^{\frac{(log(T))^2}{\kappa}} C e^{-\frac{\kappa t}{\log (t)}} (1-e^{-t\omega})&\leq \sum_{t\geq  \log(T)/\kappa}^{\frac{(log(T))^2}{\kappa}} C e^{-\frac{\kappa t}{\log (T)}} \nonumber\\
 &\leq C\bigg[\frac{1}{e}+\frac{1}{e^{(1+\kappa/log(T))}} + \dots \nonumber\\
 &\;\;\;\;\;\;+ \frac{1}{e^2} + \frac{1}{e^{(2+\kappa/log(T))}} + \dots  \nonumber\\
 &\;\;\;\;\;\;\;\;\vdots \nonumber\\
 &\;\;\;\;\;\;+ \frac{1}{e^{log(T)}} \bigg]\nonumber\\
& \leq \frac{C}{\kappa}\sum_{i=1}^{\log (T)} \frac{\log(T)}{e^i}=\frac{C}{\kappa}\times\frac{\log(T)}{e}\times\frac{1-\frac{1}{T}}{1-\frac{1}{e}}\nonumber\\
&=O(\log(T))
\end{align}

For the second sum in  \eqref{Sum_second}, we note that we can actually upper bound it by harmonic series by using Lemma 1 as follows:

\begin{align}
\sum_{t\geq \frac{(log(T))^2}{\kappa}}^{T} Ce^{-\frac{\kappa t}{\log (t)}} (1-e^{-t\omega}) &\leq \sum_{t\geq \frac{(log(T))^2}{\kappa}}^{T} Ce^{-\frac{\kappa t}{\log (T)}} \nonumber\\
&\leq C \bigg[ \frac{1}{T} +\frac{1}{e^{\log(T) +\kappa/\log(T)  }}+\dots \nonumber\\
&\;\;\;\;\;\;+ \frac{1}{T+1} +\frac{1}{e^{\log(T+1) +\kappa/\log(T+1)  }}+\dots \nonumber\\
&\;\;\;\;\;\;\;\;\vdots \nonumber\\
&\;\;\;\;\;\;+\frac{1}{2T} \bigg]\nonumber\\
&\leq \frac{C}{\kappa} \sum_{i=1}^{2T} \frac{1}{i} \leq \frac{C}{\kappa} (\log(2T)+1)\nonumber\\
&=O(\log(T))
\end{align}\qedsymbol
\subsection{Final regret bound}

Back to our initial argument, and using the last two subsection upper-bound we have:

\begin{align} \label{final_regret}
\kR_T &\leq \Delta \sum_{t=1}^T \Pr \left[ \tilde v_{\star,t} \leq \tilde v_{a,t} \cap N_{\star,t} > u_t \right] + \Delta  \sum_{t=1}^T \Pr\left[ N_{\star,t} \leq u_t \right]
\nonumber\\
&\leq \Delta \sum_{t=1}^T \frac{2}{t} +\Delta u_T +\Delta c_{\Delta,\gamma} + \Delta \beta(T,\omega,C)+\Delta g_{T}\nonumber\\
&=  \zeta_{\Delta, \gamma}\log(T) + \theta_{\Delta, \gamma}
\nonumber\\&=O(\log T)
\end{align}

In \eqref{final_regret}, $g_{T}$ is the correction term in BESA+ algorithm which accounts for the minimum number of times we are supposed to pull suboptimal arm up to time $T$ which is bounded by $\log (T)$. $\zeta_{\Delta, \gamma}, \theta_{\Delta, \gamma}$ are constatnts and they are functions of $\Delta, \gamma$.
%Where the last inequality in \eqref{master_lemma_2} is obtained via applying Hoefding lemma ($v_\star-v_a>0$)\citep{Bardenet2015}. In consequence, As $s\rightarrow \infty$, the probability of $\Pr[E_j]$ converges to zero when $t\geq u_t(1+2u_t)$. 

%Thus we can conclude that there exists a time point $t\geq u_t(1+2u_t)$, where after this point the probability of $N_\star,a<u_t$ starts converging to zero. 

\qedsymbol

\section{Tools}

\begin{lemma}[McDiarmid's inequality \cite{tolstikhin2017concentration}]
    \label{lem:mcdiarmid}
    Let $X = X_1,\dots , X_n$ be $n$ independent random variables taking values from some space $\cX$, and assume a function $f: \cX^n \mapsto \Real$ that satisfies the following boundedness condition (bounded differences):
    \begin{align*}
        \sup_{x_1, \dots, x_n, \hat x_i} |f(x_1, x_2, \dots, x_i, \dots, x_n) - f(x_1, x_2, \dots, \hat x_i, \dots, x_n)| \leq c
    \end{align*}
    for all $i\in\{ 1, \dots, n \}$. Then for any $\epsilon > 0$, we have
    \begin{align*}
        \Pr[f(X_1, \dots, X_n) - \Esp[f(X_1, \dots, X_n)] \geq \epsilon] \leq \exp \bigg( - \frac{2\epsilon^2}{nc^2} \bigg).
    \end{align*}
\end{lemma}

Consider $v = \mu - \rho \sigma^2$ and
\begin{align*}
    \hat v(x_1, \dots, x_n)
    &= \frac{1}{n} \sum_{i=1}^n x_i - \rho \frac{1}{n} \sum_{i=1}^n (x_i - \hat \mu)^2 \\
    &= \frac{1}{n} \sum_{i=1}^n x_i - \rho \frac{1}{n^2} \sum_{i=1}^n \sum_{j=1}^n \frac{(x_i - x_j)^2}{2}.
\end{align*}
Then, for $x_i \in [0, 1]$ we have
\begin{align*}
    |\hat v(x_1, x_2, \dots, x_i, \dots x_n) - \hat v(x_1, x_2, \dots, \hat x_i, \dots x_n)| \leq \frac{1}{n} + \rho \frac{1}{2n}.
\end{align*}
Using Lemma~\ref{lem:mcdiarmid} with $c = \frac{1}{n}(1 + \rho/2)$, we have
\begin{align*}
    \Pr[\hat v_n  - v] \geq \epsilon] \leq \exp \bigg( - \frac{2n\epsilon^2}{(1 + \rho/2)^2} \bigg).
\end{align*}

\begin{lemma}[McDiarmid's-like inequality for subsampling without replacement~\cite{el2009transductive}]
    \label{lem:mcdiarmid:without_replacement}
    Let $X = X_1,\dots , X_n$ be $n$ independent random variables taking values from some space $\cX$, and let $Y = {Y_1, \dots, Y_m}$ be $m \leq n$ variables subsampled without replacement from $X$. Also assume a function $f: \cX^n \mapsto \Real$ that satisfies the following boundedness condition (bounded differences):
    \begin{align*}
        \sup_{y_1, \dots, y_m, \hat y_i} |f(y_1, y_2, \dots, y_i, \dots, y_m) - f(y_1, y_2, \dots, \hat y_i, \dots, y_m)| \leq c
    \end{align*}
    for all $i\in\{ 1, \dots, m \}$. Then for any $\epsilon > 0$, we have
    \begin{align*}
        \Pr[f(Y_1, \dots, Y_m) - \Esp[f(Y_1, \dots, Y_m)] \geq \epsilon] \leq \exp \bigg( - \frac{2\epsilon^2}{\min(m, n-m) c^2} \bigg).
    \end{align*}
\end{lemma}
Following a similar logic as previous with Lemma~\ref{lem:mcdiarmid:without_replacement}, with $c = \frac{1}{m}(1 + \rho/2)$, we have
\begin{align*}
    \Pr[\tilde v_{m,n} - \hat v_n] \geq \epsilon] \leq \exp \bigg( - \frac{2m^2\epsilon^2}{\min(m, n-m) (1 + \rho/2)^2} \bigg).
\end{align*}

\bibliographystyle{plain}
\bibliography{app}

% --- supplement: my submission/appendix1.tex ---

\title{Appendix}
\date{}
\maketitle

\appendix

%!TEX root = ./appendix1.tex

\section{Proof of Theorem 0.1}

We consider the two-action (two-armed) setting $\cA = \{\star, a\}$. Recall that $\star := \argmax_{i \in \{\star, a\}} v_i$ is the optimal action and let $\Delta := v_\star - v_a$ denote the suboptimal gap.
We first rewrite the cumulative pseudo-regret up to time $T$ as
\begin{align}
    \kR_T = \Delta \Esp[N_{a,T}].
\end{align}

\paragraph{Notation:} We use the notation $\hat q_{i,t}$ to denote the empirical estimate of the quantity $q$ associated with action $i$ given previous observations.  We also use the notation $\tilde q_{i,t}$ to denote the empirical estimate of the quantity $q$ associated with action $i$ computed on a subsample without replacement of previous observations. Also, $N_{a,T}$ denotes the number of time arm $a$ has been selected by the algorithm up to time $T$.

 Our proof is going to be to some extent similar to \cite{Baransi2014}. We see that the regret can be bounded by controlling the expected number of suboptimal plays, which we decompose as follows for any $u > 0$:
\begin{align} \label{regret_expectation}
    \Esp[N_{a,T}]
    &= \Esp \left[ \sum_{t=1}^T \Big( \indic{\tilde v_{\star,t} \leq \tilde v_{a,t} \cap N_{\star,t} > u} \cup \indic{\tilde v_{\star,t} \leq \tilde v_{a,t} \cap N_{\star,t} \leq u} \Big) \right] \nonumber \\
    &\leq \Esp \left[ \sum_{t=1}^T \Big( \indic{\tilde v_{\star,t} \leq \tilde v_{a,t} \cap N_{\star,t} > u} \cup \indic{N_{\star,t} \leq u} \Big) \right] \nonumber \\
    &= \sum_{t=1}^T \Pr \left[ \tilde v_{\star,t} \leq \tilde v_{a,t} \cap N_{\star,t} > u \right] + \sum_{t=1}^T \Pr\left[ N_{\star,t} \leq u \right].
\end{align}

As it can be seen, \eqref{regret_expectation} consists of two terms where one term depends on  $N_{\star,t} > u $ and the other one depends on $N_{\star,t} \leq u$. In this proof, we will show the probability of the first term and then prove that the probability of $N_{\star,t} \leq u$ is controllable.

\subsection{Number of play of suboptimal arm}

Consider a sequence of size $N_{a,t}$, $X_{1:N_{a,t}}^a:=\{X_1^a, ..., X_{N_{a,t}}^a\}$, consisting of observations from arm $a$ up to time $t$.
% Assume $\hat{\mu}$ is the empirical mean of this sequence. In this regard,  consider the following sequence as the transformed version of the initial  sequence $X_{1:N_{a,t}}^a$:
% \begin{align}\label{magic}
% V_{1:N_{a,t}}^a:=\{X_1^a -\frac{N_{a,t}}{N_{a,t}-1}\rho (X_1^a -\hat{\mu})^2, ..., X_{N_{a,t}}^a -\frac{N_{a,t}}{N_{a,t}-1}\rho (X_{N_{a,t}}^a -\hat{\mu})^2\}
% \end{align}
% In this regard, the value calculated in the generalized BESA algorithm for arm $a$ is
% \begin{align}
% \hat{v}_{a,N_{a,t}}=\frac{1}{N_{a,t}}\sum_{i=1}^{N_{a,t}} X_i^a -\frac{N_{a,t}}{N_{a,t}-1}\rho (X_i^a -\hat{\mu})^2=\frac{1}{N_{a,t}}\sum_{v_i \in V_{1:N_{a,t}}^a} v_i
% \end{align}
 % which is an unbiased estimate of the $v_a=\mu-\rho\sigma^2_a$. Thus, we can apply McDiarmid inequality \citep{tolstikhin2017concentration} for the variables of the sequence \eqref{magic} and have an upper bound on its probability of error from the true value $v^a$:
Using Lemma~\ref{lem:mcdiarmid}, we have that
\begin{align}
    \label{eqn:bound:sample}
    \Pr\left[ \hat v^a_{N_{a,t}} \leq v^a - \gamma \sqrt{\frac{\log(1/\delta)}{2N_{a,t}}} \right] \leq \delta.
\end{align}
Similarly, using Lemma~\ref{lem:mcdiarmid:without_replacement} for the subsample of $N = \min(N_{a,t}, N_{\star,t})$ without replacement, we have that
\begin{align}
    \label{eqn:bound:subsample}
    \Pr\left[ \tilde v_N^a \leq \hat v_{N_{a,t}}^a - \gamma \sqrt{\frac{\min(N, N_{a,t}-N) \log(1/\delta)}{2N^2}} \right] \leq \delta.
\end{align}

By definition of the BESA algorithm, it holds that $a_t = a$ implies
\begin{align*}
    \tilde v_{\star,t} &\leq \tilde v_{a,t} \\
    \tilde v_{\star,t} + \hat v_{\star,t} - \hat v_{\star,t} + v_\star - v_\star &\leq \tilde v_{a,t} + \hat v_{a,t} - \hat v_{a,t} + v_a - v_a \\
    \Delta &\leq (\tilde v_{a,t} - \hat v_{a,t}) + (\hat v_{a,t} - v_a) + (\hat v_{\star,t} - \tilde v_{\star,t}) + (v_{\star} - \hat v_{\star,t}).
\end{align*}

Now, using Eq.~\ref{eqn:bound:sample} and~\ref{eqn:bound:subsample}, we have with probability higher than $1-2\delta$ that
\begin{align}
    \label{eqn:delta}
    \Delta &\leq
    \gamma \sqrt{\frac{\min(N, N_{a,t}-N) \log(1/\delta)}{2N^2}} +
    \gamma \sqrt{\frac{\log(1/\delta)}{2N_{a,t}}} \nonumber \\
    &+ \gamma \sqrt{\frac{\min(N, N_{\star,t}-N) \log(1/\delta)}{2N^2}} +
    \gamma \sqrt{\frac{\log(1/\delta)}{2N_{\star,t}}} \nonumber \\ 
    &\leq
    2\gamma \sqrt{\frac{ \log(1/\delta)}{2N}} +
    \gamma \sqrt{\frac{\log(1/\delta)}{2N_{a,t}}}  +\gamma \sqrt{\frac{\log(1/\delta)}{2N_{\star,t}}}    
\end{align}

We have two states here. Whether $N_{a,t}<N_{\star,t}$ or $N_{\star,t} \leq N_{a,t}$ Let us first consider the situation where $N_{a,t}<N_{\star,t}$, such that $N = N_{a,t}$. Under this circumstance, Eq.~\ref{eqn:delta}:

\begin{align} \label{delta_1}
    \Delta &\leq 4\gamma \sqrt{\frac{\log(1/\delta)}{2N_{a,t}}} 
    \end{align}
By putting $\delta=\frac{1}{t}$, we can deduce from \eqref{delta_1} that:
\begin{align}\label{delta_2}
N_{a,t} &\leq \frac{16\gamma^2} {\Delta^2} \log(t)
\end{align}

In other words, under this circumstance we have $N_{a,t}=O(\log(t))$ which illustrates that we can control the number of suboptimal arm play when $N_{a,t}<N_{\star,t}$.

Now, by following similar steps for the other case when $N_{a,t}\geq N_{\star,t}$, we have  :
\begin{align}\label{delta_2}
N_{\star,t} &\leq \frac{16\gamma^2} {\Delta^2} \log(t)
\end{align}

As a result, if $N_{\star,t} $ is played played more than $u_t=\frac{16\gamma^2} {\Delta^2} \log(t)$, the algorithm chooses suboptimal arm $O(\log(t))$ with probability $1-\frac{2}{t}$. 

In the next subsection, we will illustrate how we can upper-bound the probability of $N_{\star,t} <u_t$ and based on that we will compute the final expected regret of BESA+ algorithm.

\subsection{Probability of not selecting optimal arm frequently}

Following the idea of \cite{Baransi2014}, let $t_j$ denote the time step at which action $\star$ is selected for the $j$-th time, with $t_0 = 0$, and $\tau_j = t_{j+1} - t_j - 1$ being the number of time steps between the $j$-th and the $(j+1)$-th play of action $\star$. Note that $\tau_0 \leq 2$ since all actions are played once during the initialization step. Assume that $t\geq 3u_t$ we then have:
\begin{align}\label{event_1}
    \Pr [N_{\star,t} \leq u_t]
    &= \Pr[N_{a,t} > t - u_t] \\\nonumber
    &\leq   \Pr[\exists j \in \{1,\dots,\floor{u_t} \}: \tau_j > t /u_t-1] \\\nonumber
    &\leq  \sum_{j=1}^{\floor{u_t}} \Pr[\tau_j > t /u_t-1] \\
\end{align}

Now if we show that the the probability of the event $E_j=\{\tau_j > t /u_t-1\}$ is ignorable, we can hopefully deduce that it is unlikely that the BESA+ algorithm selects the optimal arm less than $u_t$ time and probably we  can control the final expected regret. In this regard, it is easy to see that:
\begin{align}\label{event_2}
    \Pr [E_j]
%    &= \Pr[N_{a,t} > t - u_t] \\\nonumber
    &\leq   \Pr[\forall j \in \{t_j,\dots,t_j+\floor{t/u_t }-1\}: a_s=a] \\\nonumber
\end{align}

%Now if we split the time interval $M_{j,t}=[t_j,t_j+\ceil{t/u_t-1}]$ to two sets:
%\begin{align*}
%M_{j,t}^1&=[t_j,t_j+\ceil{\frac{t/u_t-1}{2}}]\\
%M_{j,t}^2&=[t_j+\ceil{\frac{t/u_t-1}{2}},t_j+t/u_t-1]\\
%\end{align*} 
%As action $a$ is chosen repeatedly during $M_{j,t}^1$, for all time step $s$ in $M_{j,t}^2$ we have:
%\begin{align}
%N_{a,s}=s-j\geq t_j+\ceil{\frac{t/u_t-1}{2}}-j \geq \ceil{\frac{t/u_t-1}{2}}
%\end{align}
%In other words, when  $t\geq u_t(1+2u_t)$, for all $s \in M_{j,t}^2$ we have $N_{a,s}\geq u_t\geq j$. Accordingly, as mentioned in \citep{Baransi2014}, we need to define the quantity $n_{t,j}=\max \{\ceil{\frac{t/u_t-1}{2}},j\}$.
%
%Now, using the assumption that $t\geq u_t(1+2u_t)$ and previous arguments, we can find the probability of event $E_j$: 
%\begin{align}\nonumber \label{master_lemma}
%\Pr[E_j] &\leq \Pr[\forall s \in M_{j,t}: a_s=a]\\\nonumber
%&\leq \Pr[\forall s \in  M_{j,t}^2: a_s =a \cap N_{a,s} \geq n_{t,j}]\\\nonumber
%&\leq \Pr[\forall s \in  M_{j,t}^2: \hat{v}_{a,s}\left(I_s(N_{a,s},j)\right)>\hat{v}_{\star,s}\cap N_{a,s} \geq n_{t,j}]\\
%&\leq \Pr[\forall s \in  M_{j,t}^2: \hat{v}_{a,s}\left(I_s(s-j,j)\right)>\hat{v}_{\star,s} ]
%\end{align}
%where the last line of \eqref{master_lemma} stems from the fact that we have $N_{a,s}=s-j$ and as the number of times we have chosen arm $a$ is more than arm $\star$, we will subsample from arm $a$. We want to show that the probability of \eqref{master_lemma} is small when $v_a<v_\star$. 

Note that as in BESA+ algorithm, given the record (reward history) of an arm $a$, the sub-sampling process of arm $a$ is independent of the previous ones. Define the interval $M_{j,t}=[t_j,t_j+\floor{t/u_t}-1]$. 

By applying McDiarmid’s inequality (lemma B.1 and B.2) and assuming $t>c_{\Delta,\gamma}$ where $c_{\Delta,\gamma}>\exp(\frac{4\gamma^2\log(2)}{(v_a-v_\star)^2})$, we have:

\begin{align}\nonumber \label{master_lemma_2}
\Pr[E_j] &\leq \Pr[\forall s \in M_{j,t}: a_s=a]\\\nonumber
%&\leq \Pr[\forall s \in  M_{j,t}^2: \tilde{v}_{a,s}\left(I_s(s-j,j)\right)>\tilde{v}_{\star,s} ]\\\nonumber
&=\prod_{s=t_j}^{t_j+\floor{t/u_t}-1}\Pr[\tilde{v}_{a,s}\left(\cX_{a,s}(I_a(s-j,j))\right)>\tilde{v}_{\star,s} ]\\\nonumber
&=\prod_{s=t_j}^{t_j+\floor{t/u_t}-1}\Pr[\tilde{v}_{a,s}\left(\cX_{a,s}(I_a(s-j,j))\right)-\tilde{v}_{\star,s}> 0]\\\nonumber
&=\prod_{s=t_j}^{t_j+\floor{t/u_t}-1}\Pr[(\tilde{v}_{a,s}(\cX_{a,s}(I_a(s-j,j)))-v_a)-(\tilde{v}_{\star,s}-v_\star)> v_\star-v_a]\\\nonumber
&=\prod_{s=t_j}^{t_j+\floor{t/u_t}-1}\Pr\left[~\left[(\tilde{v}_{a,s}(\cX_{a,s}(I_a(s-j,j)))-\tilde{v}_{\star,s})-(\hat{v}_{a,s}-\hat{v}_{\star,s})\right]+\left[(\hat{v}_{a,s}-\hat{v}_{\star,s})-(v_a-v_\star)\right]>v_\star-v_a~\right]\\\nonumber
&\leq\prod_{s=t_j}^{t_j+\floor{t/u_t}-1}\left(\Pr\left[(\tilde{v}_{a,s}(\cX_{a,s}(I_a(s-j,j)))-\tilde{v}_{\star,s})-(\hat{v}_{a,s}-\hat{v}_{\star,s})>\frac{1}{2}(v_{\star}-v_{a})~\right]+ \Pr\left[(\hat{v}_{a,s}-\hat{v}_{\star,s})-(v_a-v_\star) >\frac{1}{2}(v_\star-v_a)~\right]\right) \\\nonumber
&\leq \prod_{s=t_j}^{t_j+\floor{t/u_t}-1} \exp \left(\frac{-j(v_a-v_\star)^2}{2\gamma^2}+\log(2)\right)~~~~~~&&\\\nonumber
&\leq \prod_{s=t_j}^{t_j+\floor{t/u_t}-1} \exp \left(\frac{-j(v_a-v_\star)^2}{4\gamma^2}
\right) ~~~~~~~~~~~~~~~~~~~~~~~~~~~~~~~~~~~~~~~~~~~~~~~~~~~~~ \text{since we have $j \geq \log(t)$}
\\
&= \exp \left(\frac{-j(\floor{t/u_t})(v_a-v_\star)^2}{4\gamma^2}
\right)
\end{align}

In order to make computations easier we define new notations here. We know that $u_t=m\log(t)$ where $m$ is a constant dependent on $v_a-v_\star$. Define $\omega=\frac{(v_a-v_\star)^2}{4\gamma^2}$. Also define $\kappa=\frac{\omega}{m}$.

Now, we have: 

\begin{align} \label{Sum_first}
\sum_{t=1}^T\Pr[N_{\star,t} \leq u_t]&\leq c_{\Delta,\gamma}+\sum_{t\geq c_{\Delta,\gamma}}^T \sum_{j=1}^{\floor{u_t}} \exp \left(-j\floor{\frac{t}{m\log(t)}}\omega \right) \nonumber \\
&\leq c_{\Delta,\gamma} +\sum_{t\geq c_{\Delta,\gamma}}^T e^{-\floor{\frac{t}{m\log(t)}}\omega} \frac{1-e^{-t\omega}}{1-e^{-\floor{\frac{t}{m\log(t)}}\omega}}  \nonumber \\
&\leq c_{\Delta,\gamma} +\sum_{t\geq c_{\Delta,\gamma}}^T Ce^{-\frac{\kappa t}{\log(t)}} (1-e^{-t\omega}) \nonumber \\
&\leq c_{\Delta,\gamma}+ \beta(T,\omega,C)
\end{align}

Where $C=\frac{e^\omega}{1-e^{-3\omega}}$ and $$\beta(T,\omega,C)=\sum_{t\geq c_{\Delta,\gamma}}^{T} Ce^{-\frac{\kappa t}{\log (t)}} (1-e^{-t\omega}) $$ It turns out that based on Lemma 2, $\beta(T,\omega,C)$ can be bounded by a harmonic series when $c_{\Delta,\gamma}= \max\{ \frac{\log(T)}{\kappa},\exp(\frac{4\gamma^2\log(2)}{(v_a-v_\star)^2}),3u_t\}$. As a result, \eqref{Sum_first} is upper bounded by $O(\log(T))$. Before digging into further analysis, let's consider the following Lemma:

\textbf{Lemma 1}: for all $x> 1$ we have:
$$\log(x)+\frac{1}{\log(x)} > \log(x+1)$$

\textbf{Proof:} Using the mean value theorem, we know that there exists $x<y<x+1$ such that:
$$\log(x+1)-\log(x)=\frac{1}{y}$$
As a result we have:
$$\log(x+1)-\log(x)<\frac{1}{x}$$
Now, it is sufficient to show that:
$$\frac{1}{\log(x)}\geq \frac{1}{x}$$ 
which holds for any $x>1$. \qedsymbol

\textbf{Lemma 2}: $\beta(T,\omega,C)$ can be upper bounded by a harmonic series when $c_{\Delta,\gamma}= \max\{ \frac{\log(T)}{\kappa},\exp(\frac{4\gamma^2\log(2)}{(v_a-v_\star)^2}),3u_t\}$ and we have:

\begin{align}
\label{beta_function}
\beta(T,\omega,C) =O(log(T))
\end{align}

\indent \textbf{Proof:}
We can write the $\beta(t,\omega,C)$ as follows:
\begin{align} \label{Sum_second}
\beta(t,\omega,C)&=\sum_{t\geq c_{\Delta,\gamma}}^T Ce^{-\frac{\kappa t}{\log(t)}} (1-e^{-t\omega})\nonumber\\
&\leq \sum_{t\geq  \log(T)/\kappa}^{\frac{(log(T))^2}{\kappa}} C e^{-\frac{\kappa t}{\log (t)}} (1-e^{-t\omega})+\sum_{t\geq \frac{(log(T))^2}{\kappa}}^{T} Ce^{-\frac{\kappa t}{\log (t)}} (1-e^{-t\omega})
\end{align}

Now we will study each sum in the RHS of  \eqref{Sum_second} separately. We need to show that the first sum is bounded by $O(\log(T))$. If we expand the first sum, we notice the following:

\begin{align}
\sum_{t\geq  \log(T)/\kappa}^{\frac{(log(T))^2}{\kappa}} C e^{-\frac{\kappa t}{\log (t)}} (1-e^{-t\omega})&\leq \sum_{t\geq  \log(T)/\kappa}^{\frac{(log(T))^2}{\kappa}} C e^{-\frac{\kappa t}{\log (T)}} \nonumber\\
 &\leq C\bigg[\frac{1}{e}+\frac{1}{e^{(1+\kappa/log(T))}} + \dots \nonumber\\
 &\;\;\;\;\;\;+ \frac{1}{e^2} + \frac{1}{e^{(2+\kappa/log(T))}} + \dots  \nonumber\\
 &\;\;\;\;\;\;\;\;\vdots \nonumber\\
 &\;\;\;\;\;\;+ \frac{1}{e^{log(T)}} \bigg]\nonumber\\
& \leq \frac{C}{\kappa}\sum_{i=1}^{\log (T)} \frac{\log(T)}{e^i}=\frac{C}{\kappa}\times\frac{\log(T)}{e}\times\frac{1-\frac{1}{T}}{1-\frac{1}{e}}\nonumber\\
&=O(\log(T))
\end{align}

For the second sum in  \eqref{Sum_second}, we note that we can actually upper bound it by harmonic series by using Lemma 1 as follows:

\begin{align}
\sum_{t\geq \frac{(log(T))^2}{\kappa}}^{T} Ce^{-\frac{\kappa t}{\log (t)}} (1-e^{-t\omega}) &\leq \sum_{t\geq \frac{(log(T))^2}{\kappa}}^{T} Ce^{-\frac{\kappa t}{\log (T)}} \nonumber\\
&\leq C \bigg[ \frac{1}{T} +\frac{1}{e^{\log(T) +\kappa/\log(T)  }}+\dots \nonumber\\
&\;\;\;\;\;\;+ \frac{1}{T+1} +\frac{1}{e^{\log(T+1) +\kappa/\log(T+1)  }}+\dots \nonumber\\
&\;\;\;\;\;\;\;\;\vdots \nonumber\\
&\;\;\;\;\;\;+\frac{1}{2T} \bigg]\nonumber\\
&\leq \frac{C}{\kappa} \sum_{i=1}^{2T} \frac{1}{i} \leq \frac{C}{\kappa} (\log(2T)+1)\nonumber\\
&=O(\log(T))
\end{align}\qedsymbol
\subsection{Final regret bound}

Back to our initial argument, and using the last two subsection upper-bound we have:

\begin{align} \label{final_regret}
\kR_T &\leq \Delta \sum_{t=1}^T \Pr \left[ \tilde v_{\star,t} \leq \tilde v_{a,t} \cap N_{\star,t} > u_t \right] + \Delta  \sum_{t=1}^T \Pr\left[ N_{\star,t} \leq u_t \right]
\nonumber\\
&\leq \Delta \sum_{t=1}^T \frac{2}{t} +\Delta u_T +\Delta c_{\Delta,\gamma} + \Delta \beta(T,\omega,C)+\Delta g_{T}\nonumber\\
&=  \zeta_{\Delta, \gamma}\log(T) + \theta_{\Delta, \gamma}
\nonumber\\&=O(\log T)
\end{align}

In \eqref{final_regret}, $g_{T}$ is the correction term in BESA+ algorithm which accounts for the minimum number of times we are supposed to pull suboptimal arm up to time $T$ which is bounded by $\log (T)$. $\zeta_{\Delta, \gamma}, \theta_{\Delta, \gamma}$ are constatnts and they are functions of $\Delta, \gamma$.
%Where the last inequality in \eqref{master_lemma_2} is obtained via applying Hoefding lemma ($v_\star-v_a>0$)\citep{Bardenet2015}. In consequence, As $s\rightarrow \infty$, the probability of $\Pr[E_j]$ converges to zero when $t\geq u_t(1+2u_t)$. 

%Thus we can conclude that there exists a time point $t\geq u_t(1+2u_t)$, where after this point the probability of $N_\star,a<u_t$ starts converging to zero. 

\qedsymbol

\section{Tools}

\begin{lemma}[McDiarmid's inequality \cite{tolstikhin2017concentration}]
    \label{lem:mcdiarmid}
    Let $X = X_1,\dots , X_n$ be $n$ independent random variables taking values from some space $\cX$, and assume a function $f: \cX^n \mapsto \Real$ that satisfies the following boundedness condition (bounded differences):
    \begin{align*}
        \sup_{x_1, \dots, x_n, \hat x_i} |f(x_1, x_2, \dots, x_i, \dots, x_n) - f(x_1, x_2, \dots, \hat x_i, \dots, x_n)| \leq c
    \end{align*}
    for all $i\in\{ 1, \dots, n \}$. Then for any $\epsilon > 0$, we have
    \begin{align*}
        \Pr[f(X_1, \dots, X_n) - \Esp[f(X_1, \dots, X_n)] \geq \epsilon] \leq \exp \bigg( - \frac{2\epsilon^2}{nc^2} \bigg).
    \end{align*}
\end{lemma}

Consider $v = \mu - \rho \sigma^2$ and
\begin{align*}
    \hat v(x_1, \dots, x_n)
    &= \frac{1}{n} \sum_{i=1}^n x_i - \rho \frac{1}{n} \sum_{i=1}^n (x_i - \hat \mu)^2 \\
    &= \frac{1}{n} \sum_{i=1}^n x_i - \rho \frac{1}{n^2} \sum_{i=1}^n \sum_{j=1}^n \frac{(x_i - x_j)^2}{2}.
\end{align*}
Then, for $x_i \in [0, 1]$ we have
\begin{align*}
    |\hat v(x_1, x_2, \dots, x_i, \dots x_n) - \hat v(x_1, x_2, \dots, \hat x_i, \dots x_n)| \leq \frac{1}{n} + \rho \frac{1}{2n}.
\end{align*}
Using Lemma~\ref{lem:mcdiarmid} with $c = \frac{1}{n}(1 + \rho/2)$, we have
\begin{align*}
    \Pr[\hat v_n  - v] \geq \epsilon] \leq \exp \bigg( - \frac{2n\epsilon^2}{(1 + \rho/2)^2} \bigg).
\end{align*}

\begin{lemma}[McDiarmid's-like inequality for subsampling without replacement~\cite{el2009transductive}]
    \label{lem:mcdiarmid:without_replacement}
    Let $X = X_1,\dots , X_n$ be $n$ independent random variables taking values from some space $\cX$, and let $Y = {Y_1, \dots, Y_m}$ be $m \leq n$ variables subsampled without replacement from $X$. Also assume a function $f: \cX^n \mapsto \Real$ that satisfies the following boundedness condition (bounded differences):
    \begin{align*}
        \sup_{y_1, \dots, y_m, \hat y_i} |f(y_1, y_2, \dots, y_i, \dots, y_m) - f(y_1, y_2, \dots, \hat y_i, \dots, y_m)| \leq c
    \end{align*}
    for all $i\in\{ 1, \dots, m \}$. Then for any $\epsilon > 0$, we have
    \begin{align*}
        \Pr[f(Y_1, \dots, Y_m) - \Esp[f(Y_1, \dots, Y_m)] \geq \epsilon] \leq \exp \bigg( - \frac{2\epsilon^2}{\min(m, n-m) c^2} \bigg).
    \end{align*}
\end{lemma}
Following a similar logic as previous with Lemma~\ref{lem:mcdiarmid:without_replacement}, with $c = \frac{1}{m}(1 + \rho/2)$, we have
\begin{align*}
    \Pr[\tilde v_{m,n} - \hat v_n] \geq \epsilon] \leq \exp \bigg( - \frac{2m^2\epsilon^2}{\min(m, n-m) (1 + \rho/2)^2} \bigg).
\end{align*}

\bibliographystyle{plain}
\bibliography{app}